\theoremstyle{plain}
\newtheorem{theorem}{Theorem}[section]
\newtheorem{example}{Example}
\theoremstyle{definition}
\newtheorem{definition}[theorem]{Definition}
\theoremstyle{remark}
\declaretheoremstyle[%
  spaceabove=-6pt,%
  spacebelow=6pt,%
  headfont=\normalfont\itshape,%
  postheadspace=1em,%
  qed=\qedsymbol%
]{mystyle}
\DeclareMathOperator{\Id}{\mathbbm{1}} 
\NewDocumentCommand{\grad}{e{_^}}{%
  \mathop{}\!
  \nabla
  \IfValueT{#1}{_{\!#1}}
  \IfValueT{#2}{^{#2}}
}
\DeclareMathOperator{\E}{\mathop{\mathbb{E}}}
\DeclareMathOperator{\Var}{\mathop{\text{Var}}}
\newcommand{\WMC}{\text{WMC}} 
\newcommand{\digit}[1]{\vcenter{\hbox{\includegraphics[height=10pt]{#1}}}}
\newcommand{\var}{\text{var}}
\icmltitlerunning{On the Hardness of Probabilistic Neurosymbolic Learning}
\begin{document}

\twocolumn[
\icmltitle{On the Hardness of Probabilistic Neurosymbolic Learning}

\icmlsetsymbol{equal}{*}

\begin{icmlauthorlist}
\icmlauthor{Jaron Maene}{kul}
\icmlauthor{Vincent Derkinderen}{kul}
\icmlauthor{Luc De Raedt}{kul,orebro}
\end{icmlauthorlist}

\icmlaffiliation{kul}{KU Leuven, Department of Computer Science, Leuven, Belgium.}
\icmlaffiliation{orebro}{\"Orebro University, Centre for Applied Autonomous Sensor Systems, \"Orebro, Sweden}

\icmlcorrespondingauthor{Jaron Maene}{jaron.maene@kuleuven.be}

\icmlkeywords{Neurosymbolic AI, Weighted Model Counting, Gradient Estimation}

\vskip 0.3in
]

\printAffiliationsAndNotice{}

\begin{abstract}
 The limitations of purely neural learning have sparked an interest in probabilistic neurosymbolic models, which combine neural networks with probabilistic logical reasoning. 
 As these neurosymbolic models are trained with gradient descent, we study the complexity of differentiating probabilistic reasoning.
 We prove that although approximating these gradients is intractable in general, it becomes tractable during training. Furthermore, we introduce \emph{WeightME}, an unbiased gradient estimator based on model sampling. Under mild assumptions, WeightME approximates the gradient with probabilistic guarantees using a logarithmic number of calls to a SAT solver.
 Lastly, we evaluate the necessity of these guarantees on the gradient. 
 Our experiments indicate that the existing biased approximations indeed struggle to optimize even when exact solving is still feasible.
\end{abstract}

\section{Introduction}

Neurosymbolic artificial intelligence aims to combine the strengths of neural and symbolic methods into a single unified framework~\cite{garcez_neural-symbolic_2019, hitzler_neuro-symbolic_2022, marra_statistical_2024}. 
One prominent strain of neurosymbolic models combines neural networks with probabilistic reasoning~\cite{manhaeve_deepproblog_2018, xu_semantic_2018, yang_neurasp_2021, ahmed_semantic_2023}.
These models are state-of-the-art but suffer from scalability issues due to the \#P-hard nature of probabilistic inference.
Probabilistic inference has been extensively studied on probabilistic graphical models~\citep{koller_probabilistic_2009} and model counting~\cite{chakraborty_chapter_2021}.
However, neurosymbolic learning deviates from regular probabilistic inference due to its learning aspect: the goal is to efficiently obtain useful gradients rather than to compute the exact probabilities.
We therefore investigate the \emph{approximation of gradients for probabilistic inference} in its own right and prove several positive and negative results.

In Section~\ref{sec:wmc_grad}, we show that the intractability of probabilistic inference also implies that it is impossible to approximate gradients with probabilistic guarantees in polynomial time (unless RP = NP, where RP is randomized polynomial time). 
However, in Section~\ref{sec:intractable_sampling}, we prove that the gradient approximation problem \emph{becomes tractable during training} when the neural network outputs converge to binary values. On the negative side, we find that this tractable region can become unreachable for large problems.

In Section~\ref{sec:wms}, we investigate more general-purpose gradient estimators with strong guarantees by allowing access to a SAT oracle. We introduce the \emph{Weighted Model Estimator (WeightME)}, a novel gradient estimator for probabilistic inference which relies on weighted model sampling. As opposed to existing approximations for neurosymbolic learning \cite{huang_scallop_2021, manhaeve_approximate_2021, ahmed_semantic_2022, li_softened_2022, van_krieken_-nesi_2023, verreet_explain_2023}, WeightME is unbiased and probably approximately correct under mild assumptions. Furthermore, WeightME only needs a logarithmic number of SAT invocations in the number of variables.

Finally, Sections~\ref{sec:approximating_wmc} and~\ref{sec:experiments} provide a comprehensive overview and evaluation of existing approximation techniques. Our results indicate that they have difficulties optimizing benchmarks that can still easily be solved exactly. This suggests that principled methods are warranted if we want to apply probabilistic neurosymbolic optimization to more complex reasoning tasks.

\section{Weighted Model Counting}

Probabilistic inference can be reduced to \emph{weighted model counting} (WMC).
WMC has been called ``an assembly language for probabilistic reasoning"~\citep{belle_hashing-based_2015} as a wide range of probabilistic models can be cast into WMC~\cite{chavira_probabilistic_2008,domshlak_probabilistic_2007,suciu_probabilistic_2011,holtzen_scaling_2020,derkinderen_semirings_2024}.
Bayesian networks are a notable example where WMC solvers are state-of-the-art for exact inference~\cite{chavira_probabilistic_2008}. 
Importantly, WMC also underlies  probabilistic inference in NeSy frameworks \cite{manhaeve_deepproblog_2018, xu_semantic_2018,huang_scallop_2021,ahmed_semantic_2022}.

We briefly introduce propositional logic and WMC.
Propositional \emph{variables} are denoted with lowercase letters (e.g. $x$ or $y$). A \emph{literal} is a variable $x$ or its negation $\neg x$.  A propositional \emph{formula} $\phi$ combines logical variables with the usual connectives: negation $\neg \phi$, conjunction $\phi_1 \land \phi_2$, and disjunction $\phi_1 \lor \phi_2$. 
A \emph{clause} is a disjunction of literals. A \emph{CNF} formula is a conjunction of clauses. A \emph{DNF} formula is a disjunction of conjunctions of literals. We write $\var(\phi)$ for the number of variables in a formula $\phi$. We use $\phi \mid x$ to denote the formula $\phi$ conditioned on $x$ being true, i.e. every occurrence of $x$ (resp. $\neg x$) in $\phi$ is replaced by true (resp. false). 

An \emph{interpretation} $I$ is a set of literals representing an instantiation of the variables. We say that a variable $x$ is true (resp. false) in the interpretation $I$ when $x \in I$ (resp. $\neg x \in I$). An interpretation $I$ is a \emph{model} of the formula $\phi$, denoted as $I \models \phi$, when $\phi$ is satisfied under the truth assignment of $I$.
The \emph{satisfiability problem (SAT)} asks whether a formula has at least one model, while \emph{model counting (\#SAT)} asks how many models a formula has. The \emph{weighted model count} (WMC) is a weighted sum over the models.

\begin{definition}[Weighted Model Count]
    Given a propositional logic formula $\phi$ and a weight function $w$ that maps every literal to a real number, the weighted model count is
    \begin{equation}\label{eq:wmc}
        \WMC(\phi;w) = \sum_{I: I \models \phi} \prod_{l \in I} w(l)
    \end{equation}
\end{definition}

Both \#SAT and WMC are \#P-complete problems \cite{valiant_complexity_1979}, and even approximating them with probabilistic guarantees is NP-hard \cite{roth_hardness_1996}.

Due to our probabilistic focus, we only consider weights corresponding to Bernoulli distributions.
More concretely, we assume $w(x) \in [0, 1]$ and $w(x) = 1 - w(\neg x)$ for every variable $x$.
Consequently, we have a probability distribution over interpretations: $P(I;w) = \prod_{l\in I} w(l)$. 

\begin{example}
    Consider the formula $\phi = (a \lor b) \land (\neg b \lor c)$ with weights $w(a)=0.5$, $w(b)=0.1$, and $w(c)=0.25$. This formula has four models: $\{a, \neg b, \neg c\}$, $\{\neg a, b, c\}$, $\{a, \neg b, c\}$, and $\{a, b, c\}$. When we sum the probabilities of these models we get the weighted model count.
    \begin{equation*}
        \WMC(\phi, w) = 0.3375 + 0.0125 + 0.1125 + 0.0125 = 0.475
    \end{equation*}
\end{example}

We omit the weight function $w$ when it is clear from context. In Appendix~\ref{app:wmc}, we further discuss how to handle categorical distributions and unweighted variables in WMC and explain how inference on a Bayesian network reduces to WMC.

In the neurosymbolic context, the weights $w$ are the probabilities produced by a neural network. Typically, these weights will be random at initialization and get closer to zero or one during training as the neural network becomes more confident in its predictions.

\section{From WMC to $\grad \WMC$}\label{sec:wmc_grad}
Probabilistic neurosymbolic methods optimize the WMC by iteratively updating the weights $w$ with gradient descent. So just like inference in probabilistic models can be reduced to WMC, learning in probabilistic neurosymbolic models can be reduced to taking the gradient of the WMC.

\[\grad_w \WMC(\phi; w) = \left[ \frac {\partial \WMC(\phi; w)} {\partial w(x_1)},..., \frac {\partial \WMC(\phi; w)} {\partial w(x_{\var(\phi)})} \right]^\top \]  
This WMC gradient is the core focus of our work, and as the next theorem shows, is closely related to the WMC itself.

\begin{theorem}\label{thm:equiv}
    Computing the partial derivative of a WMC problem is reducible to WMC problems, and vice versa.
\end{theorem}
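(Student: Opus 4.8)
The plan is to prove the two reducibility directions by separate explicit constructions, both resting on the Shannon (Boole) expansion of the weighted model count along a single variable. The key observation is that $\WMC$ is multilinear in the weights, so differentiating with respect to a single $w(x)$ is essentially extracting a coefficient of that expansion.

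For the forward direction (partial derivative $\to$ WMC), I would first split the model sum in the definition of $\WMC(\phi;w)$ according to whether a fixed variable $x$ is true or false in each model. Since every model $I$ of $\phi$ with $x\in I$ restricts to a model of $\phi\mid x$ over the remaining variables, and symmetrically for $\neg x$, this yields the expansion
\[
\WMC(\phi;w) = w(x)\,\WMC(\phi\mid x;w) + w(\neg x)\,\WMC(\phi\mid \neg x;w).
\]
I would then invoke the Bernoulli constraint $w(\neg x)=1-w(x)$ so that the only dependence on $w(x)$ is the explicit linear one (the two conditioned counts no longer mention $x$). Differentiating gives
\[
\frac{\partial \WMC(\phi;w)}{\partial w(x)} = \WMC(\phi\mid x;w) - \WMC(\phi\mid \neg x;w),
\]
so each partial derivative is computable from two WMC evaluations and a subtraction.

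For the reverse direction (WMC $\to$ partial derivative), I would introduce a fresh variable $t$ not appearing in $\phi$ and consider $\phi\land t$. Since $(\phi\land t)\mid t = \phi$ while $(\phi\land t)\mid \neg t$ is unsatisfiable and hence has weighted count $0$, the identity above specializes to
\[
\frac{\partial \WMC(\phi\land t;w)}{\partial w(t)} = \WMC(\phi;w),
\]
exhibiting any WMC as a single partial derivative of a closely related WMC problem.

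The constructions are short, so the difficulty is in the bookkeeping rather than in any deep obstacle. The point I would be most careful about is the role of the Bernoulli constraint $w(\neg x)=1-w(x)$: without it, $w(x)$ and $w(\neg x)$ would be independent and the derivative would collapse to a single conditioned count rather than a difference, so I must differentiate the constrained expression (equivalently, substitute $w(\neg x)=1-w(x)$ before differentiating). I would also check that conditioning genuinely eliminates $x$, so that $\WMC(\phi\mid x;w)$ and $\WMC(\phi\mid \neg x;w)$ are constant in $w(x)$, and confirm that any variable dropped syntactically by conditioning contributes a harmless factor $w(y)+w(\neg y)=1$ under the Bernoulli assumption, so the expansion holds exactly regardless of the variable-set convention. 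Finally I would note that both maps run in polynomial time (a constant number of conditionings, one fresh variable, one subtraction), so they are genuine reductions.
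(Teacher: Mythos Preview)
Your proposal is correct and follows essentially the same argument as the paper: the Shannon expansion $\WMC(\phi)=w(x)\WMC(\phi\mid x)+w(\neg x)\WMC(\phi\mid\neg x)$ yields the derivative as a difference of two conditioned WMCs, and the reverse direction is obtained by conjoining a fresh dummy variable $t$. Your write-up is more explicit about the role of the Bernoulli constraint and the polynomial-time bookkeeping, but the underlying construction is identical.
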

\begin{proof}
Both directions follow from the decomposition
\[\WMC(\phi) = w(x) \WMC(\phi\mid x) + w(\neg x) \WMC(\phi \mid \neg x)\]
The insight here is that $\WMC(\phi \mid x)$ and $\WMC(\phi \mid \neg x)$ do not have a gradient for $w(x)$, implying that
\[
\frac {\partial \WMC(\phi, w)} {\partial w(x)} = \WMC(\phi \mid x) 
 - \WMC(\phi \mid \neg x)
\]
For the other direction, we introduce a dummy variable $t$. 
\[\WMC(\phi, w) =  \frac{ \partial \WMC(\phi \land t, w)} {\partial w(t)} \qedhere\]
\end{proof}

Theorem~\ref{thm:equiv} allows us to study the $\grad \WMC$ problem using existing results on $\WMC$.
Notably, it follows immediately that computing the exact gradients is \#P-complete, just like for WMC. Gradient descent does not necessarily need exact gradients, however. We therefore investigate approximations of $\grad \WMC$. In particular, we focus on approximations that 1) are unbiased and 2) have probabilistic guarantees. Informally, this means that the approximation 1) is correct in expectation and 2) has a high probability of being close to the true gradient. This probabilistic guarantee is formalized as an $(\epsilon, \delta)$-approximation.

\begin{definition}\label{dfn:eps_delta_approximation}
    An estimator $\hat y$ is an $(\epsilon, \delta)$-approximation for $y$ when $P\left(\lvert (y - \hat y) / y \rvert > \epsilon \right) \leq \delta$.
\end{definition}

The $(\epsilon, \delta)$-approximation enforces a probabilistic bound on the relative error: the probability of having a relative error larger than $\epsilon$ is at most $\delta$. This guarantee is also known as probably approximately correct (PAC). 
Unfortunately, $(\epsilon, \delta)$-approximations of the $\WMC$ are still hard, and due to Theorem~\ref{thm:equiv} the same can be said for derivatives.

\begin{theorem}\label{thm:approx_intract}
     Computing an $(\epsilon, \delta)$-approximation of the partial derivative $\partial \WMC(\phi) / \partial w(x)$ is NP-hard.
\end{theorem}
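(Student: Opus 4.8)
The plan is to reduce the (already NP-hard) problem of $(\epsilon, \delta)$-approximating $\WMC$ to that of $(\epsilon, \delta)$-approximating a partial derivative, reusing exactly the ``vice versa'' construction from Theorem~\ref{thm:equiv}. Given an arbitrary $\WMC$ instance $(\phi, w)$, I would introduce a fresh variable $t$, consider the formula $\phi \land t$, and extend $w$ by an arbitrary weight for $t$ (the derivative will not depend on it). This is plainly a polynomial-time transformation.

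The key observation is that this reduction is \emph{value-preserving}. Using the decomposition from Theorem~\ref{thm:equiv}, $\partial \WMC(\phi \land t) / \partial w(t) = \WMC((\phi \land t) \mid t) - \WMC((\phi \land t) \mid \neg t)$. Conditioning $t$ to false makes the conjunction unsatisfiable, so the second term is exactly $0$, while conditioning $t$ to true recovers $\phi$; hence $\partial \WMC(\phi \land t)/\partial w(t) = \WMC(\phi)$ identically. Because the target of the derivative problem equals the WMC we started from, any estimator $\hat y$ that is an $(\epsilon, \delta)$-approximation of the derivative satisfies $P(\lvert (\WMC(\phi) - \hat y)/\WMC(\phi) \rvert > \epsilon) \leq \delta$ with the very same $\epsilon$ and $\delta$, i.e.\ it is simultaneously an $(\epsilon, \delta)$-approximation of $\WMC(\phi)$. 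A polynomial-time approximator for partial derivatives would therefore yield one for $\WMC$, which is NP-hard, giving the claim.

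The step that needs the most care is making sure the relative-error guarantee transfers without degradation, and this is precisely where conjoining with $t$ (rather than writing $\WMC(\phi)$ as a generic difference of two conditioned counts) is essential: it forces the $\WMC(\phi \mid \neg t)$ term to vanish, so the derivative is a single weighted count rather than a difference. Were it a genuine difference, catastrophic cancellation could make the relative error of the difference uncontrollable even when each summand is well approximated, and the reduction would break. Since here the derivative coincides exactly with a single $\WMC$, no such loss occurs and the same $(\epsilon, \delta)$ parameters carry over verbatim.
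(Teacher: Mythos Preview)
Your proposal is correct and follows essentially the same route as the paper: invoke the NP-hardness of $(\epsilon,\delta)$-approximating $\WMC$ (Roth, 1996) and apply the ``vice versa'' direction of Theorem~\ref{thm:equiv} via the fresh variable $t$. Your additional remark that the reduction is value-preserving (so the same $(\epsilon,\delta)$ parameters transfer verbatim) makes explicit a detail the paper leaves implicit in its one-line proof.
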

\begin{proof}
    Follows from Theorem 3.2 of \citet{roth_hardness_1996} and Theorem~\ref{thm:equiv}.
\end{proof}

It is worth reflecting on why an $(\epsilon, \delta)$-approximation of the gradient is relevant as opposed to the variance of the gradient, which is more commonly discussed.
The variance $\text{Var}[\hat y] = \E[(y - \hat y)^2]$ measures the squared error instead of the relative error. 
However, a low squared error is not meaningful when the WMC and $\grad \WMC$ are near zero. In Appendix~\ref{app:variance}, we demonstrate how existing gradient estimators for $\grad \WMC$ achieve low variance by simply returning zero gradients. An $(\epsilon, \delta)$-approximation does not suffer from this problem as it measures the relative instead of the squared error.

\section{The (In)tractability of Sampling}\label{sec:intractable_sampling}

One way to get an unbiased $(\epsilon, \delta)$-approximation for the WMC is by sampling. Indeed, the WMC can be seen as the expectation that a random interpretation is a model. 
\begin{equation}\label{eq:exp_wmc}
    \WMC(\phi; w) = \E_{I \sim P(I;w)} [\Id(I \models \phi)]
\end{equation}
Here, $\Id(\cdot)$ denotes the indicator function.
Equation~\ref{eq:exp_wmc} leads to a straightforward Monte Carlo approximation, which we call \textit{interpretation sampling}. To the best of our knowledge, all existing unbiased estimators for $\grad \WMC$ are based on interpretation sampling. We can obtain gradients with conventional techniques such as the score function estimator (SFE), also known as REINFORCE \cite{sutton_policy_1999}. 
\begin{equation*}
\grad_w \WMC(\phi;w) = \E_{I \sim P(I;w)} \Id(I \models \phi) \grad_w \log P(I;w)
\end{equation*}
Alternatively, we can use the decomposition of Theorem~\ref{thm:equiv}. This corresponds to the IndeCateR estimator, which is a Rao-Blackwellization of the SFE \cite{de_smet_differentiable_2023}. 
\begin{align*}
\frac {\partial \WMC(\phi, w)} {\partial w(x)} =& \ \E_{I \sim P(I\mid x;w)} \Id(I \models \phi) \\
 &- \E_{I \sim P(I\mid \neg x;w)} \Id(I \models \phi)
\end{align*}

From Theorem~\ref{thm:approx_intract}, we already know that interpretation sampling cannot give a $(\epsilon, \delta)$-approximation in polynomial time. Indeed, \citet{karp_monte-carlo_1989} showed that $c(\epsilon, \delta) / \WMC(\phi)$ samples are required for an $(\epsilon, \delta)$-approximation of $\WMC(\phi)$, where $c(\epsilon, \delta)$ is in the order of $\epsilon^{-2} \log \frac 2 \delta$.
The problematic part here is $1/\WMC(\phi)$ because the WMC can decrease exponentially in the number of variables\footnote{To see this, consider the case where $\phi$ has a single model and all weights are $w(x) = \frac 1 2$, so that $\WMC(\phi) = 2^{-var(\phi)}$.}.

\begin{figure}
    \centering
    \includegraphics[width=\columnwidth]{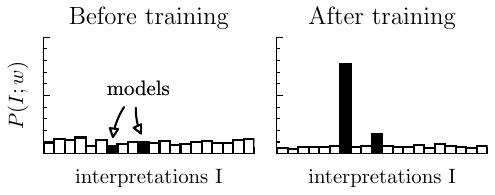}
    \caption{When sampling from the distribution of interpretations, we need to hit a model to obtain a gradient. (left): At initialization, the distribution over interpretations is fairly uniform and the probability that interpretation sampling finds a model is vanishingly small. Model sampling avoids this by sampling directly from the models. (right): When the neural network becomes more confident in its predictions, it becomes easier to sample a model.}
    \label{fig:sketch}
\end{figure}

\subsection{Example of Tractability}

In the neurosymbolic setting, interpretation sampling can \emph{become tractable during training}. 
This occurs when the neural network becomes more confident, i.e. when the weights $w(x)$ approach zero or one.
We illustrate this phenomenon on the pedagogical MNIST-addition task. 

In the MNIST-addition task, a digit classifier is trained with distant supervision.
There are no labels for the individual MNIST images, but only on the sums of two numbers represented by images.
For example, the input $\digit{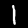}\digit{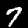}\digit{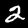}\digit{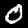}+\digit{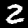}\digit{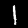}\digit{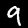}\digit{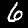}$ has the label 3916. 
We refer to \citet{manhaeve_deepproblog_2018} for more details on the experimental setup.

Figure~\ref{fig:mnist_cosine} displays the first epoch of training with exact inference on MNIST-addition with 4 digits. At every iteration, we estimate the gradient with the SFE and compare it with the true gradient. Initially, the neural network is random, and estimating the gradients is infeasible. But at about 700 training iterations, there is a transition after which the sampled gradient becomes a faithful approximation.

\begin{figure}
    \centering
    \centerline{\includegraphics[width=\columnwidth]{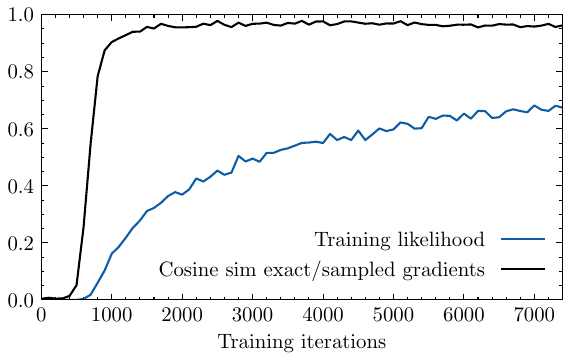}}
    \caption{First epoch of training on 4-digit MNIST-addition with exact inference. We also plot the cosine similarity between the exact gradients and the sampled gradients from the SFE.
    Results are averaged over 10 seeds.}\label{fig:mnist_cosine}
\end{figure}

\subsection{Tractability from Training}

We formalize the above example using implicants. An \emph{implicant} of a formula $\phi$ is a conjunction of literals that models a subset of the models of $\phi$.
Now, we prove that an $(\epsilon, \delta)$-approximation of the gradient becomes tractable when a single implicant dominates the WMC. 
\begin{theorem}\label{thm:tract_sampling}
    The partial derivative of $\WMC(\phi)$ with respect to $w(x)$ admits a polynomial time $(\epsilon, \delta)$-approximation when there exists a implicant $\pi$ such that $x \in \pi$ and $\WMC(\pi) \geq \WMC(\phi \mid\neg x) + c(\epsilon, \delta)$.
\end{theorem}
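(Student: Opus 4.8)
The plan is to reduce the target to the two conditional counts via Theorem~\ref{thm:equiv} and then exploit the implicant to make the dominant conditional count simultaneously exactly computable and cheaply samplable. Writing $A := \WMC(\phi\mid x)$ and $B := \WMC(\phi\mid\neg x)$, Theorem~\ref{thm:equiv} gives $\partial\WMC(\phi)/\partial w(x) = A - B$, so it suffices to produce an $(\epsilon,\delta)$-approximation of the difference $A - B$ in the sense of Definition~\ref{dfn:eps_delta_approximation}.

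First I would establish the structural facts about $\pi$. Since $\pi$ is a conjunction of literals, $\WMC(\pi) = \prod_{l\in\pi} w(l)$ is computable exactly in polynomial time. Writing $\pi = x\land\rho$, where $\rho$ collects the literals of $\pi$ other than $x$, every model of $\rho$ extends to a model of $\pi$ and hence of $\phi$, so $\rho$ is an implicant of $\phi\mid x$; this yields $A \ge \WMC(\rho) = \WMC(\pi)/w(x) \ge \WMC(\pi)$. Combined with the hypothesis $\WMC(\pi)\ge B + c(\epsilon,\delta)$, this gives $A \ge B + c(\epsilon,\delta)$, so the derivative $A - B$ is bounded below by the margin $c(\epsilon,\delta) > 0$. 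This is the crucial consequence: it certifies both that the gradient does not vanish and that $A$ dominates $B$, so no catastrophic cancellation occurs when we subtract.

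Next I would set up the estimator by coupling the IndeCateR decomposition. I draw an assignment $J$ of the variables other than $x$ from the (product-Bernoulli, hence trivially samplable) distribution $P(\,\cdot\,;w)$, and form the signed sample $Z = \Id(\{x\}\cup J\models\phi) - \Id(\{\neg x\}\cup J\models\phi)\in\{-1,0,1\}$, so that $\E[Z] = A - B$ and each sample costs one polynomial-time model check. Averaging $N$ such samples gives an unbiased $\hat D$; a Hoeffding/Chernoff bound controls $\lvert\hat D - (A-B)\rvert$, and because the margin guarantees $A - B \ge c(\epsilon,\delta)$ the absolute deviation can be driven below $\epsilon(A-B)$ with probability at least $1-\delta$ using a number of samples governed by $c(\epsilon,\delta)$ and the margin. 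Since each sample is polynomial and the sample count is controlled, the whole procedure runs in polynomial time.

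The step I expect to be the main obstacle is exactly the passage from the \emph{absolute} margin hypothesis to a \emph{relative}-error guarantee for a \emph{difference} of two sampled quantities: in general the relative error of $\hat A - \hat B$ blows up under cancellation, and the sample complexity of a naive two-sided estimate scales like $c(\epsilon,\delta)/\WMC$, which is the very obstruction identified after Theorem~\ref{thm:approx_intract}. The domination established above is what I would lean on to sidestep this, since it forces $A-B$ to be a bounded-away-from-zero fraction of the sampled mass; the delicate part is then bookkeeping the constants so that the guaranteed margin matches the $c(\epsilon,\delta)=O(\epsilon^{-2}\log\frac{2}{\delta})$ sample count rather than leaving a residual factor of $1/\WMC$.
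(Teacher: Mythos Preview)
Your approach is essentially the paper's: use Theorem~\ref{thm:equiv} to write the derivative as $A-B$, invoke the implicant to lower-bound $A-B\ge c(\epsilon,\delta)$, and then appeal to a Hoeffding-type bound on a bounded-range interpretation-sampling estimator to convert this absolute margin into a relative-error guarantee. The paper's proof is terser---it simply records $y\ge c(\epsilon,\delta)$ and $\hat y\in[-1,1]$ and defers the constant bookkeeping to ``e.g.\ Hoeffding bounds''---while you spell out the coupled IndeCateR-style estimator and flag the constant-matching issue explicitly, but the substance is the same.
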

\begin{proof}
    See Appendix~\ref{prf:tract_sampling}.
\end{proof}

The assumption that an implicant dominates the probability mass towards the end of training has been observed before empirically \cite{manhaeve_approximate_2021} and was recently proven to hold for all minima of a WMC problem \cite{van_krieken_independence_2024}.

As our definition of an $(\epsilon, \delta)$-approximation pertains to a single partial derivative, some partial derivatives in the gradient may be tractable while others are not. 
For Theorem~\ref{thm:tract_sampling} to apply, we need $\WMC(\phi \mid \neg x)$ to converge to zero as $\WMC(\pi)$ converges to one. 
This will generally be the case, unless a different implicant $\pi'$ partly covers both $\pi$ and $(\pi \mid x) \land \neg x$. Or more informally, when tractability is not achieved at convergence, it is precisely because the value of this literal does not matter for the WMC. For example, this can occur because $\pi$ is not prime, meaning $\pi \mid x$ is still an implicant.

\subsection{Tractability from Concept Supervision}

The practical relevance of Theorem~\ref{thm:tract_sampling} depends on how close the neural network must be to convergence before we end up in the tractable region.
We formalize this as follows.

\begin{definition}
    Consider a formula $\phi$ with weights $w$ and $M$ the most probable model. We say that $\phi$ is \emph{$\tau$-supervised} when
    $\tau = \#\{ x \mid x \in M \text{ and } w(x) > 1/2 \} $.
\end{definition}

In other words, the weights are $\tau$-supervised when the neural network classifies $\tau$ of the $\var(\phi)$ weights confidently with respect to a model $M$.

\begin{theorem}\label{thm:intract_concept}
To ensure a polynomial time $(\epsilon, \delta)$-approximation of the partial derivative $\partial \WMC(\phi) / \partial w(x)$ for a $\tau$-supervised formula $\phi$ with interpretation sampling, it is required that $var(\phi) - c'(\epsilon, \delta) \leq \tau$.
\end{theorem}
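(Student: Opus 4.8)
The plan is to prove the contrapositive by exhibiting, for every target value of $\tau$, a $\tau$-supervised formula on which interpretation sampling provably needs exponentially many samples as soon as $\var(\phi) - \tau$ is large. The idea is to engineer the instance so that the partial derivative collapses to a single exponentially small Bernoulli mean, at which point the lower bound of \citet{karp_monte-carlo_1989} applies verbatim and yields the stated threshold.

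Concretely, I would take $\phi = \bigwedge_{i=1}^{\var(\phi)} x_i$, whose unique (hence most probable) model is $M = \{x_1, \dots, x_{\var(\phi)}\}$. I assign weight $w(x_i) = 1$ (confidently true, or any value approaching one) to $\tau$ of the variables and weight $w(x_i) = 1/2$ to the remaining $\var(\phi) - \tau$ variables, and let $x$ denote one of the latter. Since exactly the $\tau$ weight-one variables satisfy $x_i \in M$ and $w(x_i) > 1/2$, the formula is $\tau$-supervised by construction, so it is an admissible instance for the theorem.

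Next I would evaluate the two conditioned counts. Because $\phi$ is a conjunction, $\WMC(\phi \mid \neg x) = 0$, whereas $\WMC(\phi \mid x) = \prod_{i : x_i \neq x} w(x_i) = 2^{-(\var(\phi) - \tau - 1)}$. By Theorem~\ref{thm:equiv} the target derivative is therefore $\partial \WMC(\phi)/\partial w(x) = \WMC(\phi \mid x)$, and the interpretation-sampling estimator degenerates: every sample conditioned on $\neg x$ is a non-model contributing zero, so estimating the derivative to relative error is exactly estimating the Bernoulli mean $\WMC(\phi \mid x)$ to relative error. The Karp lower bound then forces at least $c(\epsilon,\delta)/\WMC(\phi \mid x) = c(\epsilon,\delta)\, 2^{\var(\phi) - \tau - 1}$ samples. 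This count is polynomial only when $\var(\phi) - \tau$ stays below a bound $c'(\epsilon,\delta)$ depending on $\epsilon$ and $\delta$ alone, which is precisely the required inequality $\var(\phi) - c'(\epsilon,\delta) \leq \tau$.

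The main obstacle is making the reduction to a single mean airtight so that it yields a genuine necessity statement. I need the Karp bound to act as a true lower bound on the samples drawn by interpretation sampling rather than a mere sufficiency estimate, which is why I force $\WMC(\phi \mid \neg x) = 0$: this prevents any cancellation between the two conditioned terms from spuriously reducing the variance and keeps the relative-error target identical to $\WMC(\phi\mid x)$. A secondary subtlety is the precise reading of ``polynomial time'': the sample count scales as $2^{\var(\phi) - \tau}$ up to the constant $c(\epsilon,\delta)$, so the honest conclusion is that $\var(\phi) - \tau$ may exceed a fixed $\epsilon,\delta$-dependent constant only by logarithmic slack, which I would absorb into $c'(\epsilon,\delta)$ to match the stated form. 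Finally, one must verify that the confidently classified variables are chosen consistently with the most probable model, so that the supervision count is exactly $\tau$ and the instance genuinely witnesses the bound.
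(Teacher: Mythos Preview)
Your proposal is correct and follows essentially the same line as the paper: take a formula with a single model, argue that the derivative equals $\WMC(\phi\mid x)$, and invoke the Karp lower bound to force $\var(\phi)-\tau$ below an $(\epsilon,\delta)$-dependent constant. Your explicit construction with $\phi=\bigwedge_i x_i$ and weights $1$ versus $1/2$ is just a concrete instantiation of the paper's more abstract ``suppose $\phi$ has a single model and is $\tau$-supervised, so $P(M)\le 2^{\tau-\var(\phi)}$''; if anything, your version is cleaner because it makes $\WMC(\phi\mid\neg x)=0$ and the exact value of $\WMC(\phi\mid x)$ explicit rather than relying on an upper bound.
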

\begin{proof}
See Appendix~\ref{prf:instract_concept}.
\end{proof}

Theorem~\ref{thm:intract_concept} says that in the worst case, the number of variables that should be $\tau$-supervised is close to the total number of variables. More precisely, the percentage of variables that are allowed to be misclassified by the neural network decreases as $1/var(\phi)$. Firstly, this implies that the tractability of Theorem~\ref{thm:tract_sampling} might not always be reachable in practice.
Secondly, Theorem~\ref{thm:intract_concept} suggests that concept supervision cannot entirely alleviate the need for approximate inference. Concept supervision uses some direct supervision on the weights of logical variables, instead of training from scratch. For example, in the MNIST-addition example, some images get an individual label (e.g. $\digit{images/9.png}=9$).

\section{Weighted Model Sampling}\label{sec:wms}

When the WMC is too small for interpretation sampling, we need to look elsewhere to obtain gradients. 
The past decades have seen considerable progress in approximate model counting, despite its NP-hardness \cite{chakraborty_chapter_2021}.
Leveraging this progress for $\grad \WMC$ could push the scalability of gradient approximations with guarantees.

\citet{stockmeyer_complexity_1983} proved that model counting with PAC guarantees is possible using a polynomial number of SAT calls. More recently, \citet{chakraborty_algorithmic_2016} sharpened this to a logarithmic number of SAT calls in the number of variables.
The state-of-the-art in approximate unweighted model counting implements this with hash-based methods \cite{soos_bird_2019}. In short, they use hash functions to randomly partition the space of interpretations and count models within these partitions. 

Hash-based methods could be applied immediately to obtain gradients using the decomposition of Theorem~\ref{thm:equiv}. However, that would require $2 \var(\phi)$ calls to the approximate counter to calculate a single gradient, as every partial derivative is computed separately. Furthermore, the subtraction of two $(\epsilon, \delta)$-approximations weakens the provided guarantees.

We can do better by instead relying on model sampling. \emph{Weighted model sampling} (WMS) is the task of sampling models from a weighted formula $\phi$ such that the probability of selecting a model $M$ is $P(M) / \WMC(\phi)$.
Using WMS, we introduce the following estimator. 

\begin{definition}
The \emph{weighted model estimator} (WeightME) is defined as
    \[ \frac{\partial \log \WMC(\phi)} {\partial w(x)} = \E_M \left[ \frac{\Id(x \in M)} {w(x)} - \frac {\Id(x \not\in M)} {w(\neg x)} \right] \]
\end{definition}

With $\E_M$ we denote an expectation over the models of $\phi$. WeightME is unbiased and achieves probabilistic guarantees using only a constant number of samples.

\begin{theorem}\label{thm:wms_unbiased}
WeightME is an unbiased estimator for $\partial \log \WMC(\phi) / \partial w(x)$ when $w(x) \in (0, 1)$.
\end{theorem}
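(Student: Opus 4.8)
The plan is to show that both sides of the claimed identity reduce to the same quantity, namely $(\WMC(\phi \mid x) - \WMC(\phi \mid \neg x)) / \WMC(\phi)$, and then conclude by equating them.

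For the left-hand side, I would first apply the chain rule to peel off the logarithm, writing $\partial \log \WMC(\phi) / \partial w(x) = (1/\WMC(\phi)) \cdot \partial \WMC(\phi) / \partial w(x)$; here the implicit assumption that $\phi$ is satisfiable (so that $\WMC(\phi) > 0$) is what makes the logarithm and its derivative well-defined. Then I would invoke Theorem~\ref{thm:equiv}, which already establishes $\partial \WMC(\phi) / \partial w(x) = \WMC(\phi \mid x) - \WMC(\phi \mid \neg x)$, directly yielding the target expression above.

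For the right-hand side, I would expand the expectation $\E_M$ as an explicit sum over the models of $\phi$, each weighted by $P(M) / \WMC(\phi)$, and split this sum according to whether $x \in M$ or $x \not\in M$. The key algebraic step is the weight factorization $P(M) = w(x) \prod_{l \in M,\, l \neq x} w(l)$ for a model containing $x$ (and the analogous factorization with $w(\neg x)$ when $\neg x \in M$), which cancels the $1/w(x)$ (resp. $1/w(\neg x)$) factor appearing in the estimator. After this cancellation, the sum over models $M$ with $x \in M$ of the residual weight product is precisely $\WMC(\phi \mid x)$, via the natural bijection between such models and the models of the conditioned formula $\phi \mid x$; the complementary branch contributes $-\WMC(\phi \mid \neg x)$ in the same way. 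Factoring out the common $1/\WMC(\phi)$ then matches the left-hand side exactly.

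The step I expect to require the most care is the bijection argument linking weighted models of $\phi$ with the value of $x$ fixed to models of the conditioned formula $\phi \mid x$, together with the bookkeeping of which literals survive in the residual weight product. The stated assumption $w(x) \in (0,1)$ is exactly what guarantees that the two divisions in the estimator are well-defined, so I would flag its use explicitly at the cancellation step rather than treat it as a cosmetic condition.
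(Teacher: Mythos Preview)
Your proposal is correct and follows essentially the same route as the paper: expand $\E_M$ as a sum over models weighted by $P(M)/\WMC(\phi)$, split on $x \in M$ versus $x \not\in M$, cancel the $w(x)$ (resp. $w(\neg x)$) factor to obtain $\WMC(\phi \mid x)$ and $-\WMC(\phi \mid \neg x)$, and then combine with the chain rule and Theorem~\ref{thm:equiv}. The only cosmetic difference is that the paper runs the chain of equalities in one direction (from the estimator to the log-derivative) rather than meeting in the middle, and it passes through $\WMC(\phi \land x)/w(x) = \WMC(\phi \mid x)$ as a single step rather than phrasing it via an explicit bijection, but the content is identical.
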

\begin{proof}
    See Appendix~\ref{prf:wms_unbiased}.
\end{proof}

\begin{theorem}\label{thm:wms}
Given $\WMC(\phi)$, WeightME can $(\epsilon, \delta)$-approximate the partial derivative $\partial \WMC(\phi) / \partial w(x)$ using a constant number of weighted model samples when there is a constant $\lambda > 0$ such that $\lvert P(x \mid \phi) - w(x) \rvert > \lambda$ and $w(x) \in ({0}, {1})$.

\end{theorem}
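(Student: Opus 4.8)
The plan is to reduce the claim to approximating the \emph{log}-derivative, which is exactly what WeightME estimates, and then to invoke a standard concentration argument once a bounded \emph{relative} variance is established. By the chain rule,
\[
\frac{\partial \WMC(\phi)}{\partial w(x)} = \WMC(\phi)\,\frac{\partial \log \WMC(\phi)}{\partial w(x)}.
\]
Since $\WMC(\phi)$ is given, it is a known constant, and multiplying an estimator by a known constant leaves the relative error $\lvert (y-\hat y)/y\rvert$ unchanged. Hence an $(\epsilon,\delta)$-approximation of $\partial \log \WMC(\phi)/\partial w(x)$ immediately yields one of $\partial \WMC(\phi)/\partial w(x)$, so it suffices to treat the log-derivative, which Theorem~\ref{thm:wms_unbiased} identifies as the mean of the WeightME random variable.

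Next I would analyze this random variable $Y$ directly. Writing $w = w(x)$ and $p = P(x \mid \phi)$, the variable $Y$ is two-valued: it equals $1/w$ with probability $p$ and $-1/(1-w)$ with probability $1-p$. A short computation gives $\E[Y] = (p-w)/(w(1-w))$, and the two-point variance formula gives $\Var[Y] = p(1-p)/(w^2(1-w)^2)$. The crucial quantity is the relative variance, in which the $w$-dependent factors cancel:
\[
\frac{\Var[Y]}{(\E[Y])^2} = \frac{p(1-p)}{(p-w)^2}.
\]
The hypothesis $\lvert P(x\mid\phi) - w(x)\rvert > \lambda$ bounds the denominator below by $\lambda^2$, while $p(1-p) \le 1/4$, so the relative variance is at most $1/(4\lambda^2)$, a constant.

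Finally I would average $n$ independent weighted model samples and apply Chebyshev's inequality to the empirical mean $\bar Y_n$, using $\Var[\bar Y_n] = \Var[Y]/n$:
\[
P\!\left(\left\lvert \frac{\bar Y_n - \E[Y]}{\E[Y]}\right\rvert > \epsilon\right) \le \frac{\Var[Y]}{n\,\epsilon^2\,(\E[Y])^2} \le \frac{1}{4 n \lambda^2 \epsilon^2}.
\]
Setting the right-hand side equal to $\delta$ yields $n = \lceil 1/(4\lambda^2\epsilon^2\delta)\rceil$, which is independent of $\var(\phi)$, establishing the constant sample complexity.

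I expect the main obstacle to be the variance analysis, and specifically recognizing that the potentially large factors $1/w(x)$ and $1/w(\neg x)$, which blow up as the weights approach the boundary of $(0,1)$, cancel exactly in the relative variance. This cancellation is what makes the relative-error guarantee robust even when individual samples are heavy-tailed, and it is why the argument needs only $w(x) \in (0,1)$ rather than weights bounded away from $0$ and $1$. A secondary point to state carefully is that $\lambda$ must be treated as a fixed constant for the sample count to be genuinely constant; were $\lambda$ allowed to shrink with problem size, the bound would degrade correspondingly.
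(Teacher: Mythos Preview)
Your proof is correct and structurally matches the paper's: both reduce to the log-derivative via the given $\WMC(\phi)$, then show that the relative error of WeightME is governed by $\lvert P(x\mid\phi)-w(x)\rvert$ alone, with the potentially large factors $1/w(x)$ and $1/w(\neg x)$ cancelling. The paper realises this cancellation slightly differently: it rewrites the single-sample estimator as an affine function of the bounded Bernoulli indicator $T=\Id(x\in M)$, observes that $(X'-\E[X'])/\E[X'] = (T'-\E[T'])/(\E[T']-w(x))$, and applies Hoeffding's inequality to $T'$, obtaining $s = O\!\bigl(\log(1/\delta)/(\lambda^2\epsilon^2)\bigr)$. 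Your Chebyshev argument on the two-valued variable yields the weaker $n = O\!\bigl(1/(\lambda^2\epsilon^2\delta)\bigr)$. Both counts are constants in $\var(\phi)$, so the theorem as stated is established either way, but the Hoeffding route buys an exponentially better dependence on $\delta$ at no extra cost once one has reduced to a $[0,1]$-valued variable.
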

\begin{proof}
The proof relies on the observation that
\[
\frac {\partial \WMC(\phi)} {\partial w(x)} = \frac {\WMC(\phi)} {w(\neg x)} \cdot \left( \frac{\E_M [\Id(x \in M)]} {w(x)} - 1 \right)
\]
With $c(\epsilon, \delta)/\lambda^2$ weighted model samples, the above is an $(\epsilon, \delta)$-approximation.
The full proof is in Appendix~\ref{prf:wms}.
\end{proof}

This last theorem has several implications.
\begin{enumerate}
    \item The required number of model samples for an $(\epsilon, \delta)$-approximation does not increase with  formula size.
    Indeed, the condition that $w(x)$ and $\WMC(\phi)$ are dependent does not rely on the formula size. So in contrast to Theorem~\ref{thm:tract_sampling}, Theorem~\ref{thm:wms} scales to large formulas where $\WMC(\phi)$ is very small.
    \item Theorems \ref{thm:tract_sampling} and \ref{thm:wms} can be seen as complementary. The variance of IndeCateR is zero when the weights are binary, while WeightME has a low variance when the weights are close to 1/2.
    \item The condition in Theorem~\ref{thm:wms} of having $\WMC(\phi)$ can be dropped if we are satisfied with $\grad \log \WMC$ instead of $\grad \WMC$. This is a reasonable assumption as the WMC tends to be optimized with a negative log-likelihood loss.
\end{enumerate}

\subsection{Approximate model sampling}

\begin{table*}[t]
    \centering
        \caption{Classification of all considered approximation methods according to whether they are unbiased, deterministic, are learned from data, what formulas they support, and their complexity. For the complexity, we denote the number of clauses as $c$, and the number of samples as $s$. We assume the clause length is bounded, and the number of variables is linear in the size of the clause.}
    \label{tab:summary}
    
    \begin{tabular}{l c c c c c}
        \toprule
        Method & Unbiased & Deterministic & Learned & Formula & Time Complexity \\\midrule
        Interpretation sampling & Y & N & N & Any & $cs$ \\
        Weighted model sampling & Y & N & N & Any & NP-hard\\\midrule

        Hash-based & Y & N & N & Any & NP-hard \\
        Fuzzy t-norms & N & Y & N & Any & $c$ \\
        MPE / $k$-best & N & Y & N & Any & NP-hard  \\
        Unweighted model sampling & N & N & N & Any & NP-hard  \\
        Neural\#DNF & N & Y & Y & DNF & $c$  \\
        A-NeSI & N & Y & Y & Any & $c$ \\
        Collapsed sampling & Y & N & N & Any & $cs$ \\
        Bounded inference & / & Y & N & Any & $c$  \\
        Semantic strengthening & N & Y & N & Any & $c^2$  \\
        Straight-trough estimator & N & N & N & Any & $cs$  \\
        Gumbel-Softmax & N & N & N & Any & $cs$  \\
        I-MLE & N & N & N & Any & NP-hard \\
        \bottomrule
    \end{tabular}
\end{table*}

The question remains how to actually sample the models.
WMS and WMC are polynomially inter-reducible, so exact WMS is also \#P-hard \cite{jerrum_random_1986}. Scalable WMS methods hence resort to approximations and do not sample exactly according to the weighted model distribution $P(M)/\WMC(\phi)$. Crucially, Theorem~\ref{thm:wms} can still apply when the WMS samples are $(\epsilon, \delta)$-approximate (see Appendix~\ref{app:approx_wms}). A single WeightME gradient is therefore possible with only a logarithmic number of calls to a SAT oracle \cite{chakraborty_algorithmic_2016}.

Just as for WMC, approximate WMS with PAC guarantees is implemented with hash-based techniques
\cite{soos_tinted_2020}. The unweighted variant of model sampling has received more attention than WMS \cite{chakraborty_distribution-aware_2014}. However, it is possible to convert weighted into unweighted problems \cite{chakraborty_weighted_2015}, which could make it possible to leverage the progress in unweighted sampling for gradient approximation.

Biased WMS approximations have the promise to scale further, but lack guarantees. \citet{golia_designing_2021} investigated this trade-off, by using sampling testers on biased approximations. This led them to propose a performant solver that samples from a distribution that does not differ from the true distribution on statistical tests.
Markov-Chain Monte-Carlo (MCMC) techniques have also been proposed for model sampling \cite{ermon_uniform_2012}. Unfortunately, combinatorial problems face an exponential mixing time of the Markov chains. \citet{li_softened_2022} addressed this problem by using projection techniques from SMT solvers.

A promising recent development is the sampling of models using neural approximations. \citet{van_krieken_-nesi_2023} connected the theory of GFlowNets \cite{bengio_gflownet_2023} to model sampling. However, to the best of our knowledge, this has not yet been realized on practical WMC problems.

\section{Biased WMC}\label{sec:approximating_wmc}

Various approximate inference methods exist that trade in guarantees for better scalability, and are often not NP-hard.
Theory does not always align with practice, and it is hence not implausible that some of these could be competitive.

We give a comprehensive overview of relevant approximations for WMC but do not aim to be exhaustive. 
Instead, we focus on the most notable and common approaches. 
We summarize all methods in Table~\ref{tab:summary}.

\subsection{Biased inference}

The following methods compute a biased yet differentiable approximation for the WMC. 

\textbf{$\mathbf{k}$-Best} approximates a formula using a DNF, containing the $k$ implicants with the highest probability \cite{kimmig_efficient_2008, manhaeve_approximate_2021, huang_scallop_2021}. This works well when the probability mass of a formula can be captured by only a small number of implicants, for which exact WMC is then feasible. For $k=1$, $k$-best coincides with the most probable explanation (MPE).

The probability mass of the $k$-best implicants can overlap heavily, so \textbf{$\mathbf{k}$-optimal} greedily searches for the $k$-DNF with the maximum total probability \cite{renkens_k-optimal_2012}. Finding these implicants reduces to iteratively solving weighted MaxSAT problems \cite{renkens_explanation-based_2014}.

\textbf{Uniform model sampling} samples models uniformly without considering the weights. As with $k$-best, these models can be used as a lower bound for the true WMC. \citet{verreet_explain_2023} argue for this approach to maximize the diversity of samples during training.

\textbf{Fuzzy t-norms}~are arguably the most common neurosymbolic semantics. They replace the logical (Boolean) operations with continuous generalizations~\cite{badreddine_logic_2022, van_krieken_analyzing_2022}. For example, the product t-norm computes conjunction as $w(x \land y) = w(x) \cdot w(y)$ and disjunction as $w(x \lor y) = 1- w(\neg x)\cdot w(\neg y)$. Crucially, fuzzy semantics has linear complexity in the size of the propositional formula.
T-norms can also be used as an approximation of probabilistic semantics. For example, the product t-norm computes the WMC of a CNF under the assumption that all clauses are independent.

\textbf{Neural approximations}. As neural networks are universal approximators, recent works have proposed using them to approximate probabilistic inference. From the algebraic view, \citet{zuidberg_dos_martires_neural_2021} introduces the neural semiring, where a neural network learns the algebraic operations for conjunction and disjunction. \citet{abboud_learning_2020} realize this with graph neural networks. The authors limit the scope to DNFs, motivated by the tractability of this special case.

\subsection{Biased gradient estimation}

Several gradient estimators have been proposed to differentiate through sampling and hence can be applied to Equation~\ref{eq:exp_wmc}. Notably, several biased estimators can still give a gradient when the WMC is tiny.

When estimating the gradients of an expectation of the form $\E_{x\sim P(x)} f(x)$, biased estimators typically assume that $f$ is continuous. As such, we need a continuous relaxation for $\Id(I \models \phi)$. The straightforward solution we consider is to use fuzzy semantics to determine whether an interpretation is a model of the formula.

The \textbf{straight-through estimator} (STE) replaces the sample with its probability during backpropagation \cite{bengio_estimating_2013}.
\textbf{Gumbel-Softmax} estimates gradients by replacing the Categorical distribution with the differentiable Gumbel-Softmax distribution \cite{jang_categorical_2016, maddison_concrete_2016}. 

\textbf{Implicit Maximum Likelihood Estimation} (I-MLE) is a biased gradient estimator designed for combinatorial problems \cite{niepert_implicit_2021}. I-MLE approximates the gradient using a perturb-and-MAP approach: the weights are perturbed by adding some noise from a distribution such as Gumbel, and on these weights the most probable model is computed. I-MLE can hence be seen as an alternative to $k$-best rooted in perturbation-based implicit differentation.

\subsection{Hybrid methods}

Many of the approximation methods can be combined with each other, or with exact solving. We highlight some examples of this.

\textbf{Collapsed sampling}~uses an exact solver up to a time or memory limit, after which the remaining components are estimated using interpretation sampling.
Similar to exact solving, the choice of variable ordering greatly impacts the effectiveness of this method \cite{friedman_approximate_2018}.

\textbf{Semantic strengthening} combines exact solving with fuzzy t-norms \cite{ahmed_semantic_2022}. It uses mutual information to determine which clause pairs violate the independence assumption most, and hence benefit from exact compilation. The remaining conjunctions are computed with a t-norm.

\section{Experiments}\label{sec:experiments}

\begin{table*}[t]
\centering
\caption{Benchmarks for gradient estimation at initialization. For each solver-benchmark combination, we list the average cosine similarity between the approximate and true gradient, as well as the standard deviation. Higher is better. We use ${-}$~to indicate that at least one instance timed out.}
\label{tab:results}

\begin{tabular}{l c c c c}
\toprule
 & MCC2021 & MCC2022 & MCC2023 & ROAD-R \\
\midrule
Number of Instances & 120 & 93 & 61 & 100 \\
\midrule
WeightME (k=100) & \textbf{0.784 ± 0.248} &\textbf{0.721 ± 0.259} & \textbf{0.821 ± 0.172} & \textbf{0.955 ± 0.009} \\
Unweighted model sampling (k=100) & - & 0.578 ± 0.332 & - & 0.946 ± 0.014 \\
MPE & - & - & - & 0.609 ± 0.042 \\
$k$-Optimal ($k$=100) & - & - & - & 0.691 ± 0.034 \\
\midrule
Product t-norm& 0.643 ± 0.259 & 0.592 ± 0.188 & 0.537 ± 0.310 & 0.918 ± 0.005 \\
G\"odel t-norm& 0.092 ± 0.153 & 0.107 ± 0.119 & 0.103 ± 0.136 & 0.191 ± 0.094 \\
Straight-through estimator (s=10)& 0.195 ± 0.217 & 0.167 ± 0.202 & 0.309 ± 0.187 & 0.537 ± 0.078 \\
Gumbel-Softmax estimator (s=10, $\tau$=2)& 0.584 ± 0.263 & 0.501 ± 0.187 & 0.510 ± 0.293 & 0.897 ± 0.020 \\
SFE (s=10k) & 0.035 ± 0.173 & 0.010 ± 0.093 & 0.000 ± 0.000 & 0.006 ± 0.064 \\
Semantic strengthening ($\kappa$=100) & - & - & - & 0.895 ± 0.019 \\
\bottomrule
\end{tabular}
\end{table*}

It is clear that many methods exist to approximate WMC gradients, so the question arises as to which of the methods are appropriate in practice.
For this reason, we evaluate the gradients of the various methods on a set of challenging WMC benchmarks.\footnote{The code to replicate these experiments can be found at \url{https://github.com/jjcmoon/hardness-nesy}.}

\paragraph{Benchmarks} The model counting competition (MCC) is an annual competition on (weighted) model counting \cite{fichte_model_2021}. We take the benchmarks of the last three competitions (2021, 2023, and 2023) and take the instances that are probabilistic and can be solved exactly by state-of-the-art solvers \cite{lagniez_improved_2017, golia_designing_2021}. As an easier benchmark, we also include the logical formula from the ROAD-R dataset \cite{giunchiglia_road-r_2023}, which imposes constraints on the object detection of self-driving cars. All benchmarks are CNF formulas. The weights are initialized with a Gaussian distribution with a mean of 1/2.

\paragraph{Setup} All methods were executed on the same machine with an Intel Xeon E5-2690 CPU and used PyTorch to compute the gradients. WeightME was implemented using CMSGen \cite{golia_designing_2021} for WMS, as it was found to be the most scalable WMS solver available. For MPE and $k$-optimal we used the EvalMaxSAT solver \cite{avellaneda_short_2020}. The SFE was implemented with the Reinforce-Leave-One-Out baseline \cite{kool_buy_2019}. All approximate methods got a timeout of 5 minutes per gradient. The true gradients were computed as ground truths using the d4 knowledge compiler \cite{lagniez_improved_2017}, which did not get a timeout.

\subsection{Gradients at initialization}

In the first experiment, we empirically validate which approximation methods succeed at the gradient estimation task at initialization.

\paragraph{Quality} To evaluate the quality of the gradients, we compute the cosine similarity between the exact and approximate gradients on our set of benchmarks. 
Table~\ref{tab:results} summarizes the results.
WeightME attains the best results, both compared to polynomial and NP-hard methods. For the polynomial methods, the product t-norm and Gumbel-Softmax perform best. The G\"odel t-norm performs weak, as by design it gives zero gradients to all but one variable. The SFE usually fails to sample a model, which is why it performs very poorly on these benchmarks. IndeCateR is not included in the results, as it suffers even harder from this problem. Unweighted model sampling performs similarly to WeightME on the smaller ROAD-R benchmark, but falls behind on more challenging problems.

\paragraph{Scalability} In Figure~\ref{fig:cactus}, we look at the runtime of the various approaches. None of the NP-hard approximation methods manages to solve all benchmarks within the time limit (5 minutes/instance). Surprisingly, none of the tested MaxSAT or approximate WMS solvers could fully match d4. The polynomial methods scale quite well as expected, except for semantic strengthening, which quickly becomes infeasible when the number of clauses is high. 

\begin{figure}[h!]
    \centering
    \centerline{\includegraphics[width=\columnwidth]{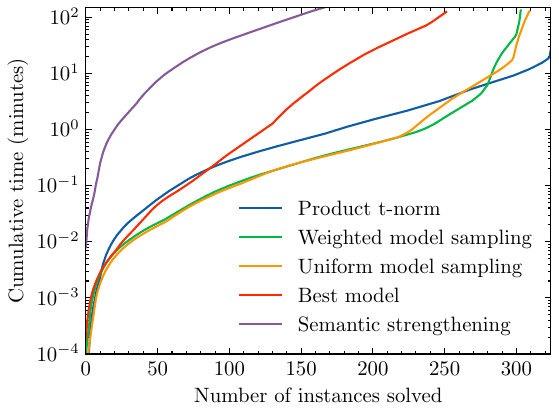}}
    \caption{Cumulative runtimes on all the MCC instances. Omitted methods achieve similar performance to the Product t-norm.}
    \label{fig:cactus}
\end{figure}

\subsection{Optimization}

\begin{figure*}[h!]
    \centering
    \centerline{\includegraphics[width=0.9\textwidth]{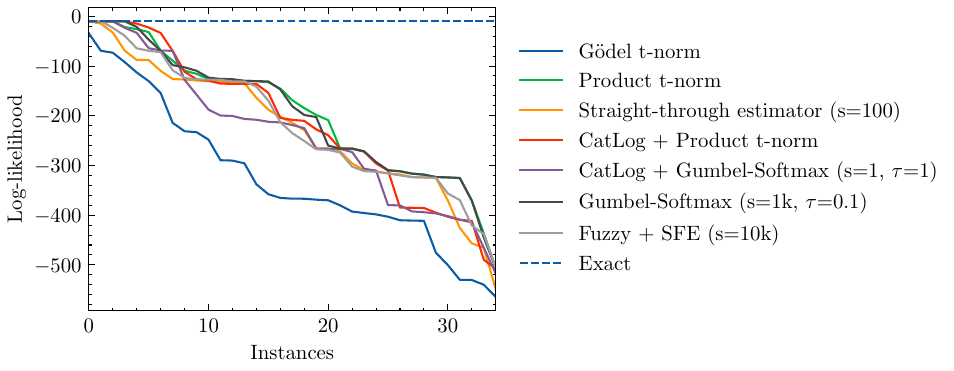}}
    \caption{Maximum log-likelihood achieved by the various biased gradient approximations, sorted from best to worst. The benchmarks are 33 easy instances from the Model Counting Competitions. Higher is better.}
    \label{fig:optim}
\end{figure*}

The previous experiments evaluate the quality of a single gradient in isolation. However, the real question is whether the approximate gradients suffice to optimize. The setup in Table~\ref{tab:results} penalizes unbiased methods with high gradient variance. So in a second experiment, the task is to optimize the log-likelihood of the formulas. We can see this as a necessary (though not necessarily sufficient) requirement for probabilistic neurosymbolic learning. The NP-hard methods that explicitly find a model can trivially achieve this task and are omitted from this experiment.

In Figure~\ref{fig:optim}, we let polynomial approximation methods optimize the log-likelihood of a formula on an easier subset of the MCC benchmarks. These benchmarks have fewer than 1000 variables and are well below the limit of state-of-the-art exact solvers. We plot the best achieved loss, within a maximum of 10000 iterations. Our results indicate that none of the tested approaches can consistently optimize. The results in Figure~\ref{fig:optim} also include some novel baselines which are described in Appendix~\ref{app:baselines}. In Appendix~\ref{app:experiments}, we further validate that adding concept supervision does not alleviate the optimization problem.

\section{Related work}

The estimation of gradients is a well-studied problem in machine learning \cite{mohamed_monte_2020}. Existing works have studied the gradient estimation of categorical distributions \cite{jang_categorical_2016, maddison_concrete_2016, de_smet_differentiable_2023}, while we are the first to focus on the gradients of $\WMC$. \citet{van_krieken_analyzing_2022} analyze the gradients of fuzzy semantics in neurosymbolic learning. On the other hand, we target neurosymbolic learning with probabilistic semantics.

Related to our work, \citet{niepert_implicit_2021} propose a gradient estimator for black-box combinatorial solvers relying on the most probable model, which is both biased and harder to compute than approximate weighted model samples. \citet{verreet_explain_2023} introduce a neurosymbolic optimization method using unweighted model samples, motivated by increasing the sample diversity. However, unweighted sampling provides weaker guarantees while being just as hard as weighted sampling.

Considerable progress has been made on approximating WMC and WMS \cite{hutchison_new_2005, gogate_samplesearch_2011, ermon_uniform_2012, chakraborty_distribution-aware_2014, chakraborty_algorithmic_2016, soos_bird_2019, golia_designing_2021, soos_tinted_2020}.
We focus on the gradients of WMC in a learning setting instead of approximating the WMC itself.

\section{Limitations}

As with any empirical study, the results of Section~\ref{sec:experiments} are influenced by the choice of benchmarks, and might not generalize to all neurosymbolic tasks. 
Our work only considers propositional logic, while some neurosymbolic systems target the more expressive first-order logic. First-order neurosymbolic systems usually end up grounding their theory such that the propositional case studied here remains relevant. Inference for weighted first-order model counting is much harder still \cite{gribkoff_understanding_2014}, and hence the need to approximate is even more pertinent. We also do not consider DNFs, which in contrast to CNFs admit a tractable $(\epsilon, \delta)$-approximation \cite{karp_monte-carlo_1989}.

\section{Conclusion}

We studied the gradient estimation of probabilistic reasoning by connecting this problem to weighted model counting. 
This allowed us to prove several results on the intractability, and prove how gradient estimation for neurosymbolic learning becomes tractable during training. Next, we contributed a general-purpose gradient estimator that builds on the progress in approximate counting and sampling. We showed that a constant number of weighted model samples is sufficient to achieve strong guarantees. 

Finally, we turned our attention to existing approximation methods. Our experiments suggest that none of the polynomial methods can consistently optimize a formula. In contrast, existing NP-hard approximations typically struggle to scale, while lacking guarantees. Potential further work includes improving our understanding of the interaction between approximating the weighted model samples and the PAC guarantee of WeightME and expanding our analysis from propositional to first-order weighted model counting.

\section*{Acknowledgements}

This research received funding from the Flemish Government (AI Research Program), the Flanders Research Foundation (FWO) under project G097720N, the KU Leuven Research Fund (C14/18/062) and TAILOR, a project from the EU Horizon 2020 research and innovation program under GA No 952215. Luc De Raedt is also supported by the Wallenberg AI, Autonomous Systems and Software Program (WASP) funded by the Knut and Alice Wallenberg Foundation.

We are grateful to Lennert De Smet for his feedback on a draft of this paper, and to Pedro Zuidberg Dos Martires for a helpful discussion.

\section*{Impact Statement}

This paper presents work whose goal is to advance the field of neurosymbolic artificial intelligence. There are many potential societal consequences of our work, none of which we feel must be specifically highlighted here.

\bibliography{references}

\begin{thebibliography}{58}
\providecommand{\natexlab}[1]{#1}
\providecommand{\url}[1]{\texttt{#1}}
\expandafter\ifx\csname urlstyle\endcsname\relax
  \providecommand{\doi}[1]{doi: #1}\else
  \providecommand{\doi}{doi: \begingroup \urlstyle{rm}\Url}\fi

\bibitem[Abboud et~al.(2020)Abboud, Ceylan, and Lukasiewicz]{abboud_learning_2020}
Abboud, R., Ceylan, I., and Lukasiewicz, T.
\newblock Learning to {Reason}: {Leveraging} {Neural} {Networks} for {Approximate} {DNF} {Counting}.
\newblock \emph{Proceedings of the AAAI Conference on Artificial Intelligence}, 34\penalty0 (04):\penalty0 3097--3104, April 2020.
\newblock ISSN 2374-3468.
\newblock \doi{10.1609/aaai.v34i04.5705}.
\newblock Number: 04.

\bibitem[Ahmed et~al.(2022)Ahmed, Teso, Chang, Van~den Broeck, and Vergari]{ahmed_semantic_2022}
Ahmed, K., Teso, S., Chang, K.-W., Van~den Broeck, G., and Vergari, A.
\newblock Semantic {Probabilistic} {Layers} for {Neuro}-{Symbolic} {Learning}.
\newblock In \emph{Advances in {Neural} {Information} {Processing} {Systems}}, May 2022.

\bibitem[Ahmed et~al.(2023)Ahmed, Chang, and Van~den Broeck]{ahmed_semantic_2023}
Ahmed, K., Chang, K.-W., and Van~den Broeck, G.
\newblock Semantic {Strengthening} of {Neuro}-{Symbolic} {Learning}.
\newblock In \emph{Proceedings of {The} 26th {International} {Conference} on {Artificial} {Intelligence} and {Statistics}}, pp.\  10252--10261. PMLR, April 2023.
\newblock ISSN: 2640-3498.

\bibitem[Avellaneda(2020)]{avellaneda_short_2020}
Avellaneda, F.
\newblock A short description of the solver {EvalMaxSAT}.
\newblock \emph{MaxSAT Evaluation}, 8, 2020.

\bibitem[Badreddine et~al.(2022)Badreddine, Garcez, Serafini, and Spranger]{badreddine_logic_2022}
Badreddine, S., Garcez, A.~d., Serafini, L., and Spranger, M.
\newblock Logic {Tensor} {Networks}.
\newblock \emph{Artificial Intelligence}, 303:\penalty0 103649, February 2022.
\newblock ISSN 00043702.
\newblock \doi{10.1016/j.artint.2021.103649}.

\bibitem[Belle et~al.(2015)Belle, Van~den Broeck, and Passerini]{belle_hashing-based_2015}
Belle, V., Van~den Broeck, G., and Passerini, A.
\newblock Hashing-based approximate probabilistic inference in hybrid domains.
\newblock In \emph{Proceedings of the 31st {Conference} on {Uncertainty} in {Artificial} {Intelligence} ({UAI})}, pp.\  141--150. AUAI PRESS, 2015.

\bibitem[Bengio et~al.(2013)Bengio, Léonard, and Courville]{bengio_estimating_2013}
Bengio, Y., Léonard, N., and Courville, A.
\newblock Estimating or {Propagating} {Gradients} {Through} {Stochastic} {Neurons} for {Conditional} {Computation}, August 2013.

\bibitem[Bengio et~al.(2023)Bengio, Lahlou, Deleu, Hu, Tiwari, and Bengio]{bengio_gflownet_2023}
Bengio, Y., Lahlou, S., Deleu, T., Hu, E.~J., Tiwari, M., and Bengio, E.
\newblock {GFlowNet} {Foundations}.
\newblock \emph{Journal of Machine Learning Research}, 24\penalty0 (210):\penalty0 1--55, 2023.
\newblock ISSN 1533-7928.

\bibitem[Chakraborty et~al.(2014)Chakraborty, Fremont, Meel, Seshia, and Vardi]{chakraborty_distribution-aware_2014}
Chakraborty, S., Fremont, D., Meel, K., Seshia, S., and Vardi, M.
\newblock Distribution-{Aware} {Sampling} and {Weighted} {Model} {Counting} for {SAT}.
\newblock \emph{Proceedings of the AAAI Conference on Artificial Intelligence}, 28\penalty0 (1), June 2014.
\newblock ISSN 2374-3468.
\newblock \doi{10.1609/aaai.v28i1.8990}.
\newblock Number: 1.

\bibitem[Chakraborty et~al.(2015)Chakraborty, Fried, Meel, and Vardi]{chakraborty_weighted_2015}
Chakraborty, S., Fried, D., Meel, K.~S., and Vardi, M.~Y.
\newblock From {Weighted} to {Unweighted} {Model} {Counting}.
\newblock In \emph{{IJCAI}}, pp.\  689--695, 2015.

\bibitem[Chakraborty et~al.(2016)Chakraborty, Meel, and Vardi]{chakraborty_algorithmic_2016}
Chakraborty, S., Meel, K.~S., and Vardi, M.~Y.
\newblock Algorithmic improvements in approximate counting for probabilistic inference: from linear to logarithmic {SAT} calls.
\newblock In \emph{Proceedings of the {Twenty}-{Fifth} {International} {Joint} {Conference} on {Artificial} {Intelligence}}, {IJCAI}'16, pp.\  3569--3576, New York, New York, USA, July 2016. AAAI Press.
\newblock ISBN 978-1-57735-770-4.

\bibitem[Chakraborty et~al.(2021)Chakraborty, Meel, and Vardi]{chakraborty_chapter_2021}
Chakraborty, S., Meel, K.~S., and Vardi, M.~Y.
\newblock Chapter 26. {Approximate} {Model} {Counting}.
\newblock In \emph{Handbook of {Satisfiability}}, pp.\  1015--1045. IOS Press, 2021.
\newblock \doi{10.3233/FAIA201010}.

\bibitem[Chavira \& Darwiche(2008)Chavira and Darwiche]{chavira_probabilistic_2008}
Chavira, M. and Darwiche, A.
\newblock On probabilistic inference by weighted model counting.
\newblock \emph{Artificial Intelligence}, 172\penalty0 (6):\penalty0 772--799, April 2008.
\newblock ISSN 0004-3702.
\newblock \doi{10.1016/j.artint.2007.11.002}.

\bibitem[De~Smet et~al.(2023)De~Smet, Sansone, and Zuidberg Dos~Martires]{de_smet_differentiable_2023}
De~Smet, L., Sansone, E., and Zuidberg Dos~Martires, P.
\newblock Differentiable {Sampling} of {Categorical} {Distributions} {Using} the {CatLog}-{Derivative} {Trick}.
\newblock In \emph{Advances in {Neural} {Information} {Processing} {Systems}}, November 2023.

\bibitem[Derkinderen et~al.(2024)Derkinderen, Manhaeve, Zuidberg Dos~Martires, and De~Raedt]{derkinderen_semirings_2024}
Derkinderen, V., Manhaeve, R., Zuidberg Dos~Martires, P., and De~Raedt, L.
\newblock Semirings for probabilistic and neuro-symbolic logic programming.
\newblock \emph{International Journal of Approximate Reasoning}, pp.\  109130, January 2024.
\newblock ISSN 0888-613X.
\newblock \doi{10.1016/j.ijar.2024.109130}.

\bibitem[Domshlak \& Hoffmann(2007)Domshlak and Hoffmann]{domshlak_probabilistic_2007}
Domshlak, C. and Hoffmann, J.
\newblock Probabilistic {Planning} via {Heuristic} {Forward} {Search} and {Weighted} {Model} {Counting}.
\newblock \emph{Journal of Artificial Intelligence Research}, 30:\penalty0 565--620, December 2007.
\newblock ISSN 1076-9757.
\newblock \doi{10.1613/jair.2289}.

\bibitem[Ermon et~al.(2012)Ermon, Gomes, and Selman]{ermon_uniform_2012}
Ermon, S., Gomes, C., and Selman, B.
\newblock Uniform solution sampling using a constraint solver as an oracle.
\newblock In \emph{Proceedings of the {Twenty}-{Eighth} {Conference} on {Uncertainty} in {Artificial} {Intelligence}}, {UAI}'12, pp.\  255--264, Arlington, Virginia, USA, August 2012. AUAI Press.
\newblock ISBN 978-0-9749039-8-9.

\bibitem[Fichte et~al.(2021)Fichte, Hecher, and Hamiti]{fichte_model_2021}
Fichte, J.~K., Hecher, M., and Hamiti, F.
\newblock The {Model} {Counting} {Competition} 2020.
\newblock \emph{ACM Journal of Experimental Algorithmics}, 26:\penalty0 13:1--13:26, 2021.
\newblock ISSN 1084-6654.
\newblock \doi{10.1145/3459080}.

\bibitem[Friedman \& Van~den Broeck(2018)Friedman and Van~den Broeck]{friedman_approximate_2018}
Friedman, T. and Van~den Broeck, G.
\newblock Approximate {Knowledge} {Compilation} by {Online} {Collapsed} {Importance} {Sampling}.
\newblock In \emph{Advances in {Neural} {Information} {Processing} {Systems}}, volume~31. Curran Associates, Inc., 2018.

\bibitem[Garcez et~al.(2019)Garcez, Gori, Lamb, Serafini, Spranger, and Tran]{garcez_neural-symbolic_2019}
Garcez, A., Gori, M., Lamb, L., Serafini, L., Spranger, M., and Tran, S.
\newblock Neural-{Symbolic} {Computing}: {An} {Effective} {Methodology} for {Principled} {Integration} of {Machine} {Learning} and {Reasoning}.
\newblock \emph{FLAP}, May 2019.

\bibitem[Giunchiglia et~al.(2023)Giunchiglia, Stoian, Khan, Cuzzolin, and Lukasiewicz]{giunchiglia_road-r_2023}
Giunchiglia, E., Stoian, M.~C., Khan, S., Cuzzolin, F., and Lukasiewicz, T.
\newblock {ROAD}-{R}: {The} {Autonomous} {Driving} {Dataset} with {Logical} {Requirements}.
\newblock \emph{Machine Learning}, May 2023.
\newblock ISSN 0885-6125, 1573-0565.
\newblock \doi{10.1007/s10994-023-06322-z}.

\bibitem[Gogate \& Dechter(2011)Gogate and Dechter]{gogate_samplesearch_2011}
Gogate, V. and Dechter, R.
\newblock {SampleSearch}: {Importance} sampling in presence of determinism.
\newblock \emph{Artificial Intelligence}, 175\penalty0 (2):\penalty0 694--729, February 2011.
\newblock ISSN 0004-3702.
\newblock \doi{10.1016/j.artint.2010.10.009}.

\bibitem[Golia et~al.(2021)Golia, Soos, Chakraborty, and Meel]{golia_designing_2021}
Golia, P., Soos, M., Chakraborty, S., and Meel, K.~S.
\newblock Designing {Samplers} is {Easy}: {The} {Boon} of {Testers}.
\newblock In \emph{2021 {Formal} {Methods} in {Computer} {Aided} {Design} ({FMCAD})}, pp.\  222--230, October 2021.
\newblock \doi{10.34727/2021/isbn.978-3-85448-046-4_31}.
\newblock ISSN: 2708-7824.

\bibitem[Gribkoff et~al.(2014)Gribkoff, Van~den Broeck, and Suciu]{gribkoff_understanding_2014}
Gribkoff, E., Van~den Broeck, G., and Suciu, D.
\newblock Understanding the complexity of lifted inference and asymmetric {Weighted} {Model} {Counting}.
\newblock In \emph{Proceedings of the {Thirtieth} {Conference} on {Uncertainty} in {Artificial} {Intelligence}}, {UAI}'14, pp.\  280--289, Arlington, Virginia, USA, July 2014. AUAI Press.
\newblock ISBN 978-0-9749039-1-0.

\bibitem[Hitzler(2022)]{hitzler_neuro-symbolic_2022}
Hitzler, P.
\newblock \emph{Neuro-{Symbolic} {Artificial} {Intelligence}: {The} {State} of the {Art}}.
\newblock IOS Press, January 2022.
\newblock ISBN 978-1-64368-245-7.

\bibitem[Holtzen et~al.(2020)Holtzen, Van~den Broeck, and Millstein]{holtzen_scaling_2020}
Holtzen, S., Van~den Broeck, G., and Millstein, T.
\newblock Scaling exact inference for discrete probabilistic programs.
\newblock \emph{Proceedings of the ACM on Programming Languages}, 4\penalty0 (OOPSLA):\penalty0 140:1--140:31, November 2020.
\newblock \doi{10.1145/3428208}.

\bibitem[Huang et~al.(2021)Huang, Li, Chen, Samel, Naik, Song, and Si]{huang_scallop_2021}
Huang, J., Li, Z., Chen, B., Samel, K., Naik, M., Song, L., and Si, X.
\newblock Scallop: {From} {Probabilistic} {Deductive} {Databases} to {Scalable} {Differentiable} {Reasoning}.
\newblock In \emph{Advances in {Neural} {Information} {Processing} {Systems}}, volume~34, pp.\  25134--25145. Curran Associates, Inc., 2021.

\bibitem[Jang et~al.(2016)Jang, Gu, and Poole]{jang_categorical_2016}
Jang, E., Gu, S., and Poole, B.
\newblock Categorical {Reparameterization} with {Gumbel}-{Softmax}.
\newblock In \emph{International {Conference} on {Learning} {Representations}}, November 2016.

\bibitem[Jerrum et~al.(1986)Jerrum, Valiant, and Vazirani]{jerrum_random_1986}
Jerrum, M.~R., Valiant, L.~G., and Vazirani, V.~V.
\newblock Random generation of combinatorial structures from a uniform distribution.
\newblock \emph{Theoretical Computer Science}, 43:\penalty0 169--188, January 1986.
\newblock ISSN 0304-3975.
\newblock \doi{10.1016/0304-3975(86)90174-X}.

\bibitem[Karp et~al.(1989)Karp, Luby, and Madras]{karp_monte-carlo_1989}
Karp, R.~M., Luby, M., and Madras, N.
\newblock Monte-{Carlo} approximation algorithms for enumeration problems.
\newblock \emph{Journal of Algorithms}, 10\penalty0 (3):\penalty0 429--448, September 1989.
\newblock ISSN 0196-6774.
\newblock \doi{10.1016/0196-6774(89)90038-2}.

\bibitem[Kimmig et~al.(2008)Kimmig, Santos~Costa, Rocha, Demoen, and De~Raedt]{kimmig_efficient_2008}
Kimmig, A., Santos~Costa, V., Rocha, R., Demoen, B., and De~Raedt, L.
\newblock On the {Efficient} {Execution} of {ProbLog} {Programs}.
\newblock In Garcia de~la Banda, M. and Pontelli, E. (eds.), \emph{Logic {Programming}}, Lecture {Notes} in {Computer} {Science}, pp.\  175--189, Berlin, Heidelberg, 2008. Springer.
\newblock ISBN 978-3-540-89982-2.
\newblock \doi{10.1007/978-3-540-89982-2_22}.

\bibitem[Koller \& Friedman(2009)Koller and Friedman]{koller_probabilistic_2009}
Koller, D. and Friedman, N.
\newblock \emph{Probabilistic {Graphical} {Models}: {Principles} and {Techniques}}.
\newblock MIT Press, July 2009.
\newblock ISBN 978-0-262-01319-2.

\bibitem[Kool et~al.(2019)Kool, Hoof, and Welling]{kool_buy_2019}
Kool, W., Hoof, H.~v., and Welling, M.
\newblock Buy 4 {REINFORCE} {Samples}, {Get} a {Baseline} for {Free}!
\newblock April 2019.

\bibitem[Lagniez \& Marquis(2017)Lagniez and Marquis]{lagniez_improved_2017}
Lagniez, J.-M. and Marquis, P.
\newblock An improved decision-{DNNF} compiler.
\newblock In \emph{Proceedings of the 26th {International} {Joint} {Conference} on {Artificial} {Intelligence}}, {IJCAI}'17, pp.\  667--673, Melbourne, Australia, August 2017. AAAI Press.
\newblock ISBN 978-0-9992411-0-3.

\bibitem[Li et~al.(2022)Li, Yao, Chen, Xu, Cao, Ma, and Lü]{li_softened_2022}
Li, Z., Yao, Y., Chen, T., Xu, J., Cao, C., Ma, X., and Lü, J.
\newblock Softened {Symbol} {Grounding} for {Neuro}-symbolic {Systems}.
\newblock In \emph{International {Conference} on {Learning} {Representations}}, September 2022.

\bibitem[Maddison et~al.(2016)Maddison, Mnih, and Teh]{maddison_concrete_2016}
Maddison, C.~J., Mnih, A., and Teh, Y.~W.
\newblock The {Concrete} {Distribution}: {A} {Continuous} {Relaxation} of {Discrete} {Random} {Variables}.
\newblock In \emph{International {Conference} on {Learning} {Representations}}, November 2016.

\bibitem[Manhaeve et~al.(2018)Manhaeve, Dumancic, Kimmig, Demeester, and De~Raedt]{manhaeve_deepproblog_2018}
Manhaeve, R., Dumancic, S., Kimmig, A., Demeester, T., and De~Raedt, L.
\newblock {DeepProbLog}: {Neural} {Probabilistic} {Logic} {Programming}.
\newblock In \emph{Advances in {Neural} {Information} {Processing} {Systems}}, volume~31. Curran Associates, Inc., 2018.

\bibitem[Manhaeve et~al.(2021)Manhaeve, Marra, and De~Raedt]{manhaeve_approximate_2021}
Manhaeve, R., Marra, G., and De~Raedt, L.
\newblock Approximate {Inference} for {Neural} {Probabilistic} {Logic} {Programming}.
\newblock In \emph{Proceedings of the 18th {International} {Conference} on {Principles} of {Knowledge} {Representation} and {Reasoning}}, pp.\  475--486, 2021.

\bibitem[Marra et~al.(2024)Marra, Dumančić, Manhaeve, and De~Raedt]{marra_statistical_2024}
Marra, G., Dumančić, S., Manhaeve, R., and De~Raedt, L.
\newblock From {Statistical} {Relational} to {Neurosymbolic} {Artificial} {Intelligence}: a {Survey}.
\newblock \emph{Artificial Intelligence}, pp.\  104062, January 2024.
\newblock ISSN 0004-3702.
\newblock \doi{10.1016/j.artint.2023.104062}.

\bibitem[Mohamed et~al.(2020)Mohamed, Rosca, Figurnov, and Mnih]{mohamed_monte_2020}
Mohamed, S., Rosca, M., Figurnov, M., and Mnih, A.
\newblock Monte {Carlo} gradient estimation in machine learning.
\newblock \emph{The Journal of Machine Learning Research}, 21\penalty0 (1):\penalty0 132:5183--132:5244, January 2020.
\newblock ISSN 1532-4435.

\bibitem[Niepert et~al.(2021)Niepert, Minervini, and Franceschi]{niepert_implicit_2021}
Niepert, M., Minervini, P., and Franceschi, L.
\newblock Implicit {MLE}: {Backpropagating} {Through} {Discrete} {Exponential} {Family} {Distributions}.
\newblock In \emph{Advances in {Neural} {Information} {Processing} {Systems}}, volume~34, pp.\  14567--14579. Curran Associates, Inc., 2021.

\bibitem[Renkens et~al.(2012)Renkens, Van den~Broeck, and Nijssen]{renkens_k-optimal_2012}
Renkens, J., Van den~Broeck, G., and Nijssen, S.
\newblock k-{Optimal}: a novel approximate inference algorithm for {ProbLog}.
\newblock \emph{Machine Learning}, 89\penalty0 (3):\penalty0 215--231, December 2012.
\newblock ISSN 0885-6125, 1573-0565.
\newblock \doi{10.1007/s10994-012-5304-9}.

\bibitem[Renkens et~al.(2014)Renkens, Kimmig, Van~den Broeck, and De~Raedt]{renkens_explanation-based_2014}
Renkens, J., Kimmig, A., Van~den Broeck, G., and De~Raedt, L.
\newblock Explanation-{Based} {Approximate} {Weighted} {Model} {Counting} for {Probabilistic} {Logics}.
\newblock \emph{Proceedings of the AAAI Conference on Artificial Intelligence}, 28\penalty0 (1), June 2014.
\newblock ISSN 2374-3468.
\newblock \doi{10.1609/aaai.v28i1.9067}.
\newblock Number: 1.

\bibitem[Roth(1996)]{roth_hardness_1996}
Roth, D.
\newblock On the hardness of approximate reasoning.
\newblock \emph{Artificial Intelligence}, 82\penalty0 (1):\penalty0 273--302, April 1996.
\newblock ISSN 0004-3702.
\newblock \doi{10.1016/0004-3702(94)00092-1}.

\bibitem[Soos \& Meel(2019)Soos and Meel]{soos_bird_2019}
Soos, M. and Meel, K.~S.
\newblock {BIRD}: {Engineering} an {Efficient} {CNF}-{XOR} {SAT} {Solver} and {Its} {Applications} to {Approximate} {Model} {Counting}.
\newblock \emph{Proceedings of the AAAI Conference on Artificial Intelligence}, 33\penalty0 (01):\penalty0 1592--1599, July 2019.
\newblock ISSN 2374-3468.
\newblock \doi{10.1609/aaai.v33i01.33011592}.
\newblock Number: 01.

\bibitem[Soos et~al.(2020)Soos, Gocht, and Meel]{soos_tinted_2020}
Soos, M., Gocht, S., and Meel, K.~S.
\newblock Tinted, {Detached}, and {Lazy} {CNF}-{XOR} {Solving} and {Its} {Applications} to {Counting} and {Sampling}.
\newblock In Lahiri, S.~K. and Wang, C. (eds.), \emph{Computer {Aided} {Verification}}, Lecture {Notes} in {Computer} {Science}, pp.\  463--484, Cham, 2020. Springer International Publishing.
\newblock ISBN 978-3-030-53288-8.
\newblock \doi{10.1007/978-3-030-53288-8_22}.

\bibitem[Stockmeyer(1983)]{stockmeyer_complexity_1983}
Stockmeyer, L.
\newblock The complexity of approximate counting.
\newblock In \emph{Proceedings of the fifteenth annual {ACM} symposium on {Theory} of computing}, {STOC} '83, pp.\  118--126, New York, NY, USA, December 1983. Association for Computing Machinery.
\newblock ISBN 978-0-89791-099-6.
\newblock \doi{10.1145/800061.808740}.

\bibitem[Suciu et~al.(2011)Suciu, Olteanu, Ré, and Koch]{suciu_probabilistic_2011}
Suciu, D., Olteanu, D., Ré, C., and Koch, C.
\newblock \emph{Probabilistic {Databases}}.
\newblock Synthesis {Lectures} on {Data} {Management}. Springer International Publishing, Cham, 2011.
\newblock ISBN 978-3-031-00751-4 978-3-031-01879-4.
\newblock \doi{10.1007/978-3-031-01879-4}.

\bibitem[Sutton et~al.(1999)Sutton, McAllester, Singh, and Mansour]{sutton_policy_1999}
Sutton, R.~S., McAllester, D., Singh, S., and Mansour, Y.
\newblock Policy {Gradient} {Methods} for {Reinforcement} {Learning} with {Function} {Approximation}.
\newblock In \emph{Advances in {Neural} {Information} {Processing} {Systems}}, volume~12. MIT Press, 1999.

\bibitem[Valiant(1979)]{valiant_complexity_1979}
Valiant, L.~G.
\newblock The {Complexity} of {Enumeration} and {Reliability} {Problems}.
\newblock \emph{SIAM Journal on Computing}, 8\penalty0 (3):\penalty0 410--421, August 1979.
\newblock ISSN 0097-5397.
\newblock \doi{10.1137/0208032}.
\newblock Publisher: Society for Industrial and Applied Mathematics.

\bibitem[van Krieken et~al.(2022)van Krieken, Acar, and van Harmelen]{van_krieken_analyzing_2022}
van Krieken, E., Acar, E., and van Harmelen, F.
\newblock Analyzing {Differentiable} {Fuzzy} {Logic} {Operators}.
\newblock \emph{Artificial Intelligence}, 302:\penalty0 103602, January 2022.
\newblock ISSN 0004-3702.
\newblock \doi{10.1016/j.artint.2021.103602}.

\bibitem[van Krieken et~al.(2023)van Krieken, Thanapalasingam, Tomczak, Harmelen, and Teije]{van_krieken_-nesi_2023}
van Krieken, E., Thanapalasingam, T., Tomczak, J.~M., Harmelen, F.~V., and Teije, A.~T.
\newblock A-{NeSI}: {A} {Scalable} {Approximate} {Method} for {Probabilistic} {Neurosymbolic} {Inference}.
\newblock In \emph{Advances in {Neural} {Information} {Processing} {Systems}}, November 2023.

\bibitem[van Krieken et~al.(2024)van Krieken, Minervini, Ponti, and Vergari]{van_krieken_independence_2024}
van Krieken, E., Minervini, P., Ponti, E.~M., and Vergari, A.
\newblock On the {Independence} {Assumption} in {Neurosymbolic} {Learning}.
\newblock In \emph{International {Conference} on {Machine} {Learning}}, April 2024.

\bibitem[Verreet et~al.(2023)Verreet, De~Smet, and Sansone]{verreet_explain_2023}
Verreet, V., De~Smet, L., and Sansone, E.
\newblock {EXPLAIN}, {AGREE} and {LEARN}: {A} {Recipe} for {Scalable} {Neural}-{Symbolic} {Learning}, 2023.

\bibitem[Wei \& Selman(2005)Wei and Selman]{hutchison_new_2005}
Wei, W. and Selman, B.
\newblock A {New} {Approach} to {Model} {Counting}.
\newblock In Hutchison, D., Kanade, T., Kittler, J., Kleinberg, J.~M., Mattern, F., Mitchell, J.~C., Naor, M., Nierstrasz, O., Pandu~Rangan, C., Steffen, B., Sudan, M., Terzopoulos, D., Tygar, D., Vardi, M.~Y., Weikum, G., Bacchus, F., and Walsh, T. (eds.), \emph{Theory and {Applications} of {Satisfiability} {Testing}}, volume 3569, pp.\  324--339. Springer Berlin Heidelberg, Berlin, Heidelberg, 2005.
\newblock ISBN 978-3-540-26276-3 978-3-540-31679-4.
\newblock \doi{10.1007/11499107_24}.
\newblock Series Title: Lecture Notes in Computer Science.

\bibitem[Xu et~al.(2018)Xu, Zhang, Friedman, Liang, and Van~den Broeck]{xu_semantic_2018}
Xu, J., Zhang, Z., Friedman, T., Liang, Y., and Van~den Broeck, G.
\newblock A {Semantic} {Loss} {Function} for {Deep} {Learning} with {Symbolic} {Knowledge}.
\newblock In \emph{Proceedings of the 35th {International} {Conference} on {Machine} {Learning}}, pp.\  5502--5511. PMLR, July 2018.
\newblock ISSN: 2640-3498.

\bibitem[Yang et~al.(2021)Yang, Ishay, and Lee]{yang_neurasp_2021}
Yang, Z., Ishay, A., and Lee, J.
\newblock {NeurASP}: embracing neural networks into answer set programming.
\newblock In \emph{Proceedings of the {Twenty}-{Ninth} {International} {Joint} {Conference} on {Artificial} {Intelligence}}, {IJCAI}'20, pp.\  1755--1762, Yokohama, Yokohama, Japan, January 2021.
\newblock ISBN 978-0-9992411-6-5.

\bibitem[Zuidberg Dos~Martires(2021)]{zuidberg_dos_martires_neural_2021}
Zuidberg Dos~Martires, P.
\newblock Neural {Semirings}.
\newblock 2021.

\end{thebibliography}

\newpage
\appendix
\onecolumn

\section{Example of Baysian Networks as Weighted Model Counting}\label{app:wmc}

As a small example, consider the context of a Bayesian network with a conditional probability $P(B|A)$ and independent probabilistic fact $P(A)$, over probabilistic Boolean variables $A$ and $B$. This can be encoded into a WMC formula $\phi'$:
\begin{equation}
    b \iff \big( a \land \theta_{b|a} \big) \lor \big(\neg a \land \theta_{b|\neg a})  \text{,}
\end{equation}
with the weights of $P(B|A)$ attached to newly introduced variables $\theta_{b|a}$ and $\theta_{b|\neg a}$. As is common, we can set $w(b) = w(\neg b) = 1$. The probability of $B$, for instance, is then $P(B) = WMC(\phi' \land b)$.

We limit our study to WMC problems where the weights correspond to a Bernoulli distribution. The example above, where $w(b) = w(\neg b) = 1$, can be encoded into this setting by instead using $w(b) = w(\neg b) = 0.5$ and normalizing the WMC by a factor $2^{u}$, with $u$ the number of variables whose original weights were $1$. 
We do remark that this affects the tractability of additive $(\epsilon, \delta)$-bounds.

Categorical variables can be modeled using exclusively Boolean variables. For example, without using $w( \cdot )=1$, suppose $A$ takes values $a_1$, $a_2$ or $a_3$:
\begin{equation}
    \big(a_1 \iff \theta_{a_1} \big) \land
    \big(a_2 \iff \neg a_1 \land \theta_{a_2|\neg a_1} \big) \land
    \big(a_3 \iff \neg a_1 \land \neg a_2 \big)
\end{equation}
The weights of $a_i$ are $0.5$ (using the method above), and the other weights are as expected: $w(\theta_{a_1}) = P(a_1)$, $w(\neg \theta_{a_1}) = 1 - w(\theta_{a_1})$, $w(\theta_{a_2 | \neg a_1}) = \frac{P(A = a_2)}{P(A \neq a_1)}$ and $w(\neg \theta_{a_2 | \neg a_1}) = 1 - w(\theta_{a_2|\neg a_1})$.

\section{Variance of IndeCateR}\label{app:variance}

Consider the single-sample IndeCateR estimator \cite{de_smet_differentiable_2023} for $\partial \WMC(\phi) / \partial w(x)$.
\[C = \Id(I_0 \land x \models \phi) - \Id(I_1 \land \neg x \models \phi)  \] 

By considering Theorem~\ref{thm:equiv}, it follows that $\Var[C] \leq \E[C^2] = (\WMC(\phi \mid x) - \WMC(\phi \mid \neg x))^2 \leq 1$.
Moreover, this variance is maximized when the weights $w$ are binary. If we also know that the weights are bounded by $[t, 1-t]$ and $\phi$ has a single model, we achieve the much stronger result that $ \Var[C] \leq (1-t)^{\var(\phi)}$. Although IndeCateR has vanishingly small variance on formulas with a single model, IndeCateR will almost never sample a model here and simply return zero gradients. This motivates that variance is not an ideal lens for looking at the quality of a gradient estimator for WMC.

\section{Proofs}\label{app:proofs}

\subsection*{Theorem \ref{thm:tract_sampling}}\label{prf:tract_sampling}
\begin{proof}
We need to show that the following bound can be computed in polynomial time:

$$ P(\lvert (y - \hat y) / y \rvert > \epsilon)
\leq \delta \ \  \text{ where }  y = \frac {\partial \WMC(\phi, w)} {\partial w(x)} = \WMC(\phi \mid x) - \WMC(\phi \mid \neg x) $$

As the theorem assumes that there is an implicant $\pi$ with $x \in \pi$, it follows that:

\begin{align*}
    y &= \WMC(\phi \mid x) - \WMC(\phi \mid \neg x) \\
    &\geq \WMC(\pi \mid x) - \WMC(\phi \mid \neg x) \\
    &\geq \WMC(\pi) - \WMC(\phi \mid \neg x) \\
    &\geq c(\epsilon, \delta)
\end{align*}

In the last step, we use the assumed inequality of the theorem. So in summary, the assumption of the theorem implies that we know that $y \geq c(\epsilon, \delta)$. We also know that $\hat y \in [-1, 1]$.
It follows that we can estimate it in polynomial time with sampling. To get a concrete expression for $c(\epsilon, \delta)$ and derive a bound on the number samples, we can use e.g. Hoefdding bounds.
\end{proof}

\subsection*{Theorem~\ref{thm:intract_concept}}\label{prf:instract_concept}

\begin{proof}
Suppose $\phi$ only has a single model. This means that $y = \partial \WMC(\phi) / \partial w(x) = \WMC(M \mid x) = p(M) / w(x)$, assuming w.l.o.g. that $x\in M$. To ensure tractability by interpretation sampling, we need $y > c(\epsilon, \delta)$ \cite{karp_monte-carlo_1989}. Because $\phi$ is $\tau$-supervised, we have $P(M) \leq 2^{\tau - \var(\phi)}$. When we combine these facts, we get that $w(x) \cdot c(\epsilon, \delta) < 2^{\tau - \var(\phi)}$, which is equivalent to $\log_2 w(x) c(\epsilon, \delta) + \var(\phi) < \tau$. To simplify the theorem, we set $c'(\epsilon, \delta) = -\log_2 w(x)c(\epsilon, \delta)$, which brings use to $\var(\phi) - c'(\epsilon, \delta) < \tau$.
\end{proof}

\subsection*{Theorem~\ref{thm:wms_unbiased}}\label{prf:wms_unbiased}
\begin{proof} We write $X$ for the random variable of WeightME. 
\begin{align*}
    \E_M[X] &=  \frac 1 {w(x)} \E_M[\Id(x \in M)] - \frac 1 {w(\neg x)} \E_M[ \Id(x \not\in M)] \\
    &= \Big( \sum_M \frac {\Id(x \in M) P(M)} {w(x) \WMC(\phi)} \Big) - \Big( \sum_M  \frac { \Id(x \not\in M) P(M)} {w(\neg x) \WMC(\phi)} \Big) \\
    &= \frac {\WMC(\phi \land x)} {w(x) \WMC(\phi)} - \frac { \WMC(\phi \land \neg x) } {w(\neg x) \WMC(\phi)} \\
    &= \frac {\WMC(\phi \mid x)} {\WMC(\phi)} - \frac { \WMC(\phi \mid \neg x) } { \WMC(\phi)} \\
    &= \frac {1} {\WMC(\phi)} \cdot \frac {\partial \WMC(\phi)} {\partial w(x)} = \frac {\partial \log \WMC(\phi)} {\partial w(x)} \qedhere
\end{align*}
\end{proof}

\subsection*{Theorem~\ref{thm:wms}}\label{prf:wms}
\begin{proof}

We introduce an random variable $T$ for $\Id(x \in M)$, where $\E_M[T] = \WMC(\phi \land x) / \WMC(\phi)$. Using $T$, single-sample WeightME can be written as 
\begin{align*}
    X &= \frac{1}{w(x)} T - \frac{1}{w(\neg x)} (1-T) \\
    &= T \left( \frac 1 {w(x)} + \frac 1 {w(\neg x)} \right) - \frac{1}{w(\neg x)} \\
    &= \frac {T} {w(x) w(\neg x)} - \frac{1}{w(\neg x)}
\end{align*}

Remember that the theorem assumes $w(x) \neq 0$ and $w(\neg x) \neq 0$. 
Next, we look for an $(\epsilon, \delta)$-approximation for WeightME, i.e. we again need to prove that
\[ P(\lvert (y - \hat y) / y \rvert > \epsilon)
\leq \delta \ \  \text{ where }  y = \frac {\partial \WMC(\phi, w)} {\partial w(x)} = \WMC(\phi \mid x) - \WMC(\phi \mid \neg x) \]

We can write WeightME with $s$ weighted model samples as the estimator $X' = \sum_{i=1}^s \frac{X_i} s$ where $\E[X'] = \E[X] = y / \WMC(\phi)$ and $\hat y = \WMC(\phi) X'$.

\begin{align*}
P\left(\left\lvert \frac{\hat y - y}{y} \right\rvert > \epsilon \right) 
&= P\left(\left\lvert \frac{X' - \E[X']}{\E[X']} \right\rvert > \epsilon \right) \\
&= P\left(\left\lvert T' - \E[T'] \right\rvert > \epsilon \lvert \E[T'] - w(x) \rvert \right) \\
&\leq 2 \exp(-2 s \epsilon^2 (\E[T'] - w(x))^2) 
\end{align*}
In the last step, we use Hoeffding's inequality. This is possible as $T_i$ is bounded by $\left[0, \frac 1 s\right]$. We can work out this expression to get a concrete lower bound on the number of samples.
\begin{align*}
\delta &= - \exp(-2 s \epsilon^2 (\E[T'] - w(x))^2)  \\
\log \frac 2 \delta &= 2 s \epsilon^2 (\E[T'] - w(x))^2 \\
s &= \frac 1 {2 \epsilon^2 (\E[T'] - w(x))^2} \log \frac 2 \delta \\
s &> \frac 1 {2 \epsilon^2 \lambda^2} \log \frac 2 \delta = c(\epsilon, \delta) / \lambda^2 
\end{align*}
The last line uses the assumption of the theorem.
\end{proof}

\subsection*{Approximate weighted model sampling}\label{app:approx_wms}

When we use an approximate weighted model sampler for WeightME, we introduce some bias. However, we can still get bounds on the approximation, if the approximate sampler has guarantees. An $(\epsilon, \delta)$-approximate sampler will sample a model $M$ of $\phi$ with a probability between $p(M)/\WMC(\phi)(1-\epsilon)$ and $ p(M)/\WMC(\phi)(1+\epsilon)$, or fail to sample a model with probability less than $\delta$. So in other words, WeightME with an $(\epsilon, \delta)$-approximate sampler is bounded between $(1-\epsilon)\E[X]$ and $(1+\epsilon)\E[X]$, and the bias on the gradient is at most $\epsilon \cdot \grad \log \WMC(\phi)$. Moreover, we retain the probabilistic guarantee of Theorem~\ref{thm:wms}.

\section{Extra Baselines}\label{app:baselines}

The CatLog-trick \cite{de_smet_differentiable_2023}
applies the same decomposition as Theorem~\ref{thm:equiv} to create gradient estimators:
\[
\frac {\partial \WMC(\phi)} {\partial w(x)} = \WMC(\phi \mid x) - \WMC(\phi \mid \neg x)
\]
We can in principle use any other approximate method to compute $\WMC(\phi \vert x)$ and $\WMC(\phi \vert \neg x)$. When these are unbiased, it can be seen as a Rao-Blackwellization.
This method does introduce a linear increase in complexity in the number of variables. In Figure ~\ref{fig:optim}, we include the CatLog-trick in combination with the product t-norm and Gumbel-softmax estimator.

Semantic strengthening needs to compare all clause pairs in order to choose which pairs to conjoin exactly. So it has a quadratic complexity in the number of clauses. As this becomes problematic on large formulas, we also consider the combination of sampling with fuzzy t-norms. Here, we keep sampling as long as we have models. The remaining conjunctions are calculated with the t-norm.

\section{Additional experiments}\label{app:experiments}

\begin{figure*}
    \centering
    \centerline{\includegraphics[width=0.9\textwidth]{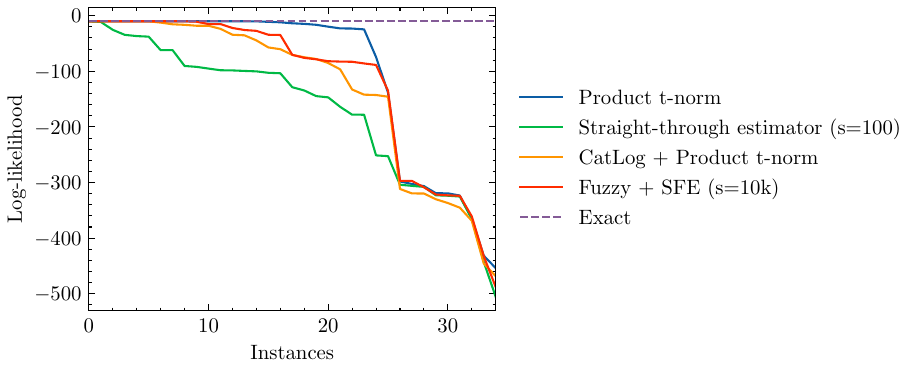}}
    \caption{Maximum negative log-likelihood achieved by the various biased gradient approximations, sorted from best to worst. The benchmarks are 33 easy instances from the Model Counting Competition. Higher is better. All weights are initialized such that 90\% of weights are already correct for a certain model.}
    \label{fig:optim_concept_supervision}
\end{figure*}

In Figure~\ref{fig:optim_concept_supervision}, we repeat the optimization experiment. However, we now provide $90\%$ accurate concept supervision. This means that we pick a model for each formula, and set the weight correctly for $90\%$ of the variables. Although the results are a clear improvement on Figure~\ref{fig:optim}, still none of the methods manage to optimize across the board. This confirms the result in Theorem~\ref{thm:intract_concept}. 

\end{document}